\numberwithin{equation}{section}
\theoremstyle{plain}
\newtheorem{theorem}{Theorem}
\newtheorem{corollary}[theorem]{Corollary}
\newtheorem{lemma}[theorem]{Lemma}
\newtheorem{remark}[theorem]{Remark}
\begin{document}

\begin{frontmatter}
\title{Semi-supervised learning in unbalanced and heterogeneous networks}
\runtitle{Semi-supervised learning in networks}

\begin{aug}
\author{\fnms{Ting} \snm{Li}\thanksref{m1}\ead[label=e1]{tlial@ust.hk}}
,
\author{\fnms{Ningchen} \snm{Ying}\thanksref{m1}\ead[label=e2]{nying@ust.hk}}
,
\author{\fnms{Xianshi} \snm{Yu}\thanksref{m1}
\ead[label=e3]{xyuai@ust.hk}}
\and
\author{\fnms{Bingyi} \snm{Jing}\thanksref{m1}\ead[label=e4]{majing@ust.hk}\ead[label=u1,url]{http://www.math.ust.hk/~majing/}}
\runauthor{Li, Ying, Yu and Jing}

\affiliation{Hong Kong University of Science and Technology\thanksmark{m1}}

\address{Ting Li, Ningchen Ying, Xianshi Yu, Bingyi Jing \\Department of Mathematics, \\the Hong Kong University of Science and Technology, \\Clear Water Bay, Hong Kong.\\
\printead{e1,e2,e3,e4}\\
\phantom{E-mail:\ }}
\end{aug}

\begin{abstract}
Community detection was a hot topic on network analysis, where the main aim is to perform unsupervised learning or clustering in networks. Recently, semi-supervised learning has received increasing attention among researchers. In this paper, we propose a new algorithm, called weighted inverse Laplacian (WIL),  for predicting labels in partially labeled networks. The idea comes from the first hitting time in random walk, and it also has nice explanations both in information propagation and the regularization framework. We propose a partially labeled degree-corrected block model (pDCBM) to describe the generation of partially labeled networks. We show that WIL ensures the misclassification rate is of order $O(\frac{1}{d})$  for the pDCBM with average degree $d=\Omega(\log n),$ and that it can handle situations with greater unbalanced than traditional Laplacian methods. WIL outperforms other state-of-the-art methods in most of our simulations and real datasets, especially in unbalanced networks and heterogeneous networks.
\end{abstract}

\begin{keyword}[class=MSC]
\kwd[Primary ]{62H30}
\kwd[; secondary ]{62G20}
\end{keyword}

\begin{keyword}
\kwd{semi-supervised learning}
\kwd{network data}
\kwd{heterogeneous networks}
\kwd{unbalanced networks}
\end{keyword}

\end{frontmatter}
\section{Introduction}


Network community detection is a traditional problem in network data analysis. However, in datasets from the real world, additional side information is often available. For instance, we might know some of the node memberships. How to obtain more accurate predictions under this semi-supervised situation is an interesting problem. The network-based semi-supervised learning (NSSL) discussed here is a special semi-supervised learning method that deals with network data in particular. Given the network structure and some of the labels, we would like to predict the unknown labels. NSSL has many real-life applications, for instance, for inferring unknown profiles from a social network; predicting a research topic from the co-authorship network; performing function annotation on protein or gene interaction networks; and predicting political election results. Figure \ref{fig:toy} shows a toy example of the function association in a protein-protein interaction network. 

\begin{figure}[H]
\centering
\captionsetup{justification=centering}
\begin{subfigure}[t]{0.5\textwidth}
  \centering
  \includegraphics[width=.6\linewidth]{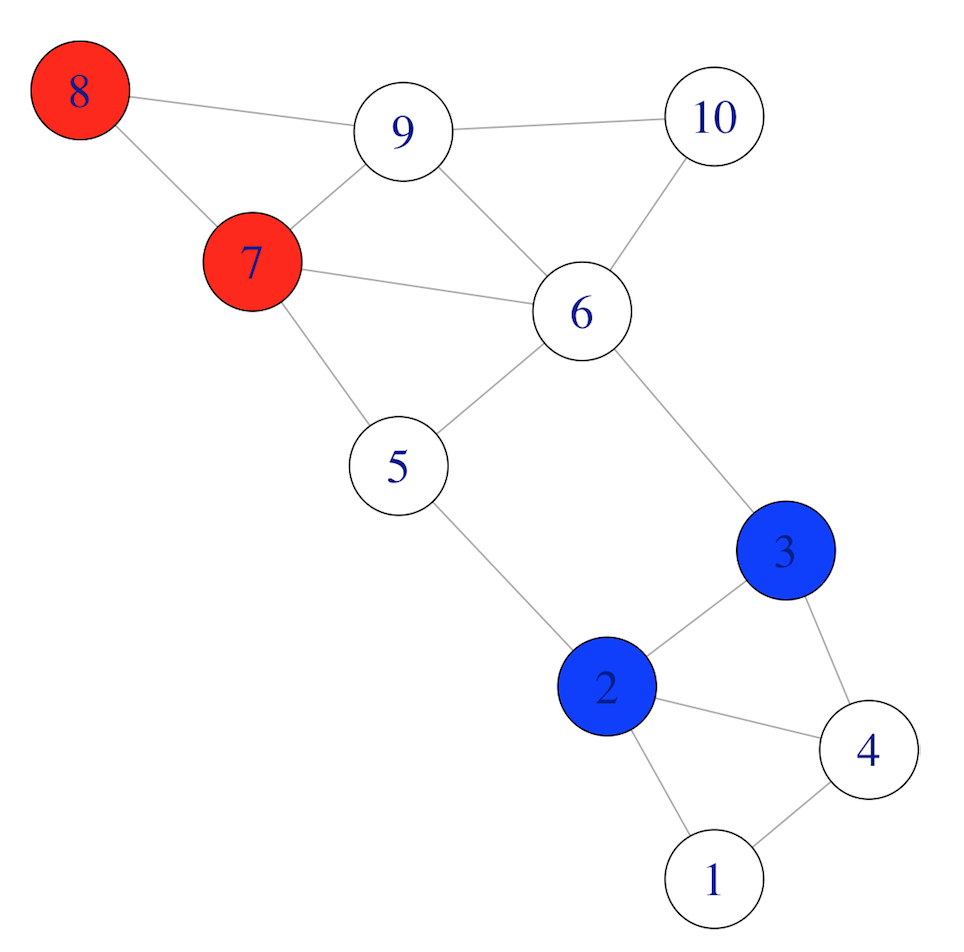}
  \caption{}
  \label{fig:input}
\end{subfigure}%
\begin{subfigure}[t]{0.5\textwidth}
  \centering
  \includegraphics[width=.6\linewidth]{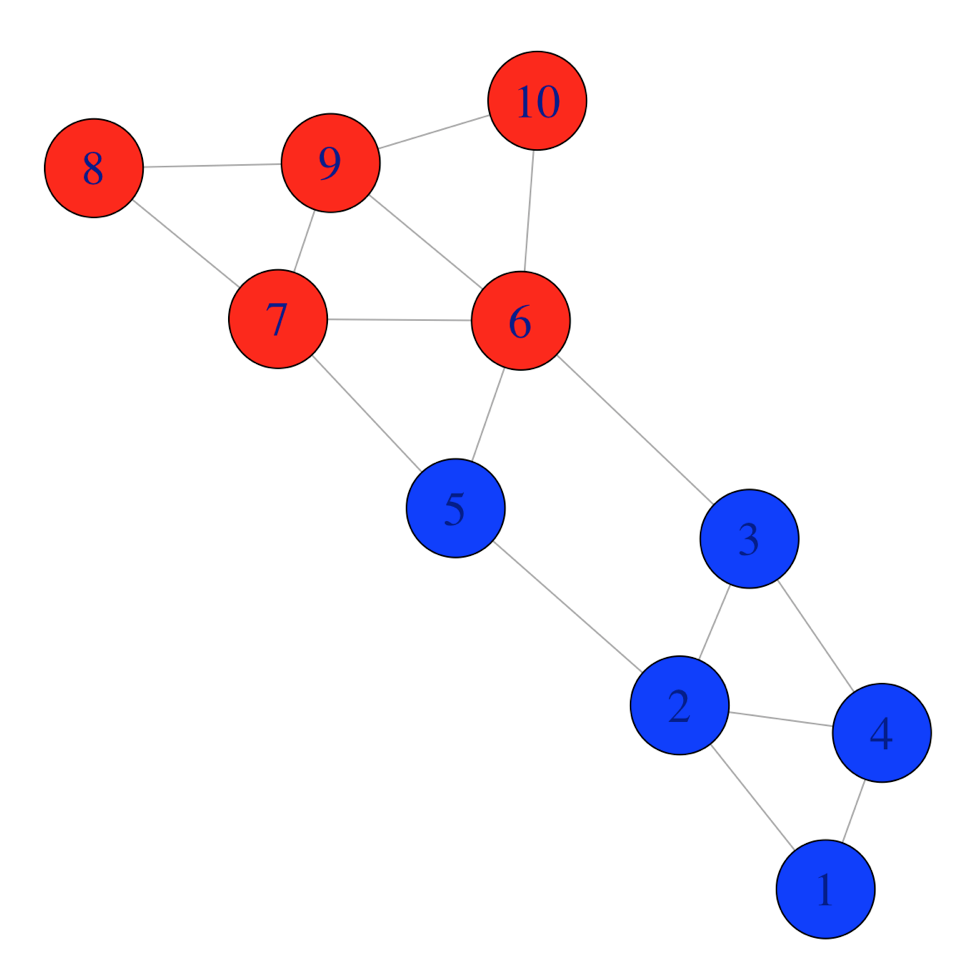}
  \caption{}
  \label{fig:output}
\end{subfigure}
\caption{A toy example of network-based semi-supervised learning: Input data consist of a protein-protein interaction network and the function labels of four proteins (2 and 3 share the same function, while 7 and 8 share another function). After applying the learning algorithm, we can predict the functions of the unknown proteins (assign 1, 4 and 5 to the blue class, and 6, 9, and 10 to the red one).}
\label{fig:toy}
\end{figure}

Most of the algorithms for community detection cannot be applied directly to NSSL. Researchers have recently attempted to find efficient algorithms for NSSL and to analyze their statistical properties, recently. \cite{r615} discussed phase transitions in the semi-supervised clustering of sparse networks using belief propagation. The effect of the relevant labels for a  vanishingly small fraction of nodes was discussed in \cite{r611} by coupling the labeled network to $k$-label broadcasting processes. \cite{r604, r610} used localized belief propagation to make predictions. The partially labeled stochastic block model  (p-SBM) was used  to model the partially labeled network and the consistency was also showed. In addition, \cite{r616} proposed two spectral-based methods that mainly focused on assortative networks (in which nodes with different labels are more likely to be connected to each other). In \cite{r612}, the authors proposed linearized belief propagation with a novel weighted initialization called Weighted Message Passing (WMP) to perform clustering in partially labeled networks generated from SBM. Moreover, a confidence-aware algorithm called CAMLP was proposed in \cite{camlp} to tackle both homophily and heterophily networks at the same time.

In contrast to previous studies, this paper not only tackles the general NSSL problem, but also pays particular attention to unbalanced and heterogeneous networks. We propose an effective algorithm to solve the NSSL problem. We also introduce a generative model to describe the data and then determine the consistency of our new algorithm under the model. 

\subsection*{Unbalanced networks}
Imbalance is widely observed in real-world networks. For example,  in gene interaction networks from the Saccharomyces Genome Database (SGD) \cite{r602}, the number of viable genes can be four times that of inviable genes, which makes the dataset highly unbalanced. Networks in our analysis of real datasets analysis are also unbalanced according to the labels. When performing semi-supervised learning in network data, imbalance will cause significant bias if we do not take any action to rectify it. Labels (communities) with more members might be able to absorb more unlabeled nodes. Such bias should be dealt with in order to identify nodes in small communities. All of these problems are seldom discussed in the NSSL literature. In the present paper, we tackle the issue of network imbalance with our new algorithm by normalizing the weight of nodes.

\subsection*{Heterogeneous networks}
A commonly used model for networks with community structures is the SBM, first presented by \cite{r603}. For decades, SBM has raised research interest in computer science, statistics, business studies as well as physics. Algorithms and consistency associated with community detection in SBM have been studied extensively. See, for example, \cite{r1, r313, r310, r311, r5, r9, r13}.  However, due to the assumption of SBM, the nodes within the same community have the same degree ditribution, which is not observied in real-world datasets. Nodes in real networks often show degree heterogeneity even they have the same label (within the same community). Examples can be found in our real-data analysis. In order to accommodate hubs in networks, Karrer and Newman proposed the degree-corrected stochastic block model (DCBM) in \cite{r6}. The theoretical property of DCBM has been studied for community detection problem in \cite{r4,r9, r600, r601}. However, to the best of our knowledge, we are the first to apply the DCBM to the NSSL problem. In the present paper, we not only study the NSSL problem in homogeneous networks but also in heterogeneous ones. We propose the partially labeled DCBM (pDCBM) to model the generation of data and prove the consistency of our new algorithm.

\subsection{Our contributions}
We summarize the main results of this paper as follows:

\subsubsection*{Weighted inverse Laplacian algorithm}
A new semi-supervised learning algorithm called the weighted inverse Laplacian (WIL) algorithm is proposed for solving the NSSL problem. By integrating the global information in the network with different normalizations of the adjacency matrix, the WIL algorithm is designed to eliminate heterogeneity and imbalance issues. With a simple form, the WIL algorithm has explanations in different points of views including information propagation, the regularization framework ant the first hitting time. It also enjoys statistical guarantee with a consistency rate in the order of $O(\frac{1}{degree}).$ Both simulation and real-data analysis show the advantage of the WIL algorithm in most of the test scenarios.

\subsubsection*{Partially labeled DCBM}
We propose a generative model called the partially labeled DCBM (pDCBM). Based on the DCBM, and by introducing the popularity of nodes, the pDCBM describes a more general data structure than the p-SBM mentioned in \cite{r604}. Theoretical study is also carried out under the pDCBM setting.

\subsubsection*{Statistical guarantee}
Theoretically, we prove the consistency of our new algorithm for the NSSL problem under the pDCBM and explore the effect of the unbalanced ratio, the out-in ratio and the labeled ratio. Our main result is as follows:
$$\mathbb{P}(err \geq \epsilon )\leq \frac{c}{\epsilon ^2(1-\beta)^2s^2\delta^2d}$$ 
which shows the convergence rate as $O(\frac{1}{d})$, the inverse of the average degree. The convergence rate also decreases with the unbalanced ratio $s$ and the labeled ratio $\delta$, but increases with the out-in ratio $\beta,$ which makes sense and matches results in our empirical study .

\subsubsection*{Transition Boundary}
We discuss the transition boundary of guaranteed consistency on the unbalanced ratio and the out-in ratio in the pDCBM. We propose that by taking particular parameters in WIL, we can tackle networks that are more unbalanced  than traditional Laplacian methods (e.g. random walk and normalized Laplacian). More details can be found in Theorem \ref{boundary} below.

\subsection{Organization of the paper}
We organize the rest of the paper as follows. We propose a new algorithm called WIL for the NSSL problem in Section \ref{sect_method}, and explain it from the angles of information propagation, the regularization framework and first hitting time in random walks.  In Section \ref{sect_consistency}, we first propose a generative model, the partially labeled DCBM (pDCBM) to describe the NSSL problem. Statistical guarantee and phase boundary are also discussed in Section \ref{sect_consistency} under the pDCBM framework. In Section \ref{sect_simu} and Section \ref{sect_real}, we show the numerical results of our new method and cutting-edge methods using both simulation data and real-world networks. We review and summarize related work in Section \ref{sect_related_works}. Finally, in Section \ref{sect_CONC}, we conclude the paper and suggest directions for future work. The technical proof of main results is given in the Appendix.

\section{Methodology}  \label{sect_method}
We propose an algorithm that utilizes the global connection information by calculating the sum of different powers of the normalized adjacency matrix in this section. We call it the weighted inverse Laplacian (WIL) method. Detailed explanations of the WIL algorithm are also presented in this section. 

First of all, we define the NSSL problem mathematically and introduce notation for later use. A network is represented as a graph $G=\{V, E\}, |V|=n,$ where $V$ is the set of nodes and $E$ is the collection of edges. Each node $i \in \{1,\dots,n\}$ is assigned a class label $y_i\in \{1,\dots,K\},$ but we only observe these for a subset of nodes $L,$ and the set of remaining nodes is $U=V/L.$ Our aim is to find the class labels of nodes in $U.$ Let $A$ be the adjacency matrix of $G$, and for any element of $A$
\[ a_{i,j}=
  \begin{cases}
    1       & \quad e_{i,j}\in E\\
    0  & \quad \text{ else.}
  \end{cases}
\]
$\hat{D}$ denotes the degree matrix of $A$ with $\hat{d}_{i,i}=\hat{d_{i}}=\sum_{j=1}^{n}a_{i,j}$ and the off-diagonal entries are all $0.$ $Y$ is an $n\times K$ matrix that encodes the given labels:
\[ Y_{i,c} =
  \begin{cases}
    1       & \quad \text{if node}\  i\in L \text{ and}\  y_i=c\\
    0  & \quad \text{else. }
  \end{cases}
\]
\subsection{WIL Algorithms}
We will first present the algorithm and then show some its explanations. We obtain $F$, the matrix of the class label scores, by applying the WIL algorithm:
$F=(\rho W+(1-\rho)W^T)Y,$ where $W=(I-\alpha\hat{D}^{-1}A)^{-1}$, and both $\alpha$ and $\rho$ are positive parameters less than 1. To understand WIL, we can write $W=\sum_{i=0}^{\infty} (\alpha \hat{D}^{-1}A)^i,$ so that WIL combines the global link information with exponential decade random work and eliminates the effect of hubs by dividing $A$ by $\hat{D}.$ By performing global integration and normalizing degrees, WIL can overcome the issue of imbalance and heterogeneity in networks. The algorithm can be summarized as follows:

\begin{algorithm}[htb]
  \caption{ Weighted inverse Laplacian algorithm (WIL)}
  \label{alg:WIL}
  \begin{algorithmic}[1]
    \Require
     Adjacency matrix: $A_{(n\times n)}$; 
     Known labels matrix: $Y_{(n\times K)}$;
     Parameters: $\alpha$, $\rho$
    \Ensure
      Label score matrix: $F_{(n\times K)}$ 
    \State Calculate the degree matrix: $\hat{D}=diag(A\textbf{1}\textbf{1}^{T})$;
    \label{code:fram:degree}
    \State Add up the powers of normalized matrix: $\hat{W}=(I-\alpha \hat{D}^{-1}A)^{-1}$;
    \label{code:fram:add}
    \State Combination: $\hat{M}=\rho \hat{W}+(1-\rho)\hat{W}^T$;
    \label{code:fram:combination}
    \State Get label score matrix: $F=\hat{M}Y$;
    \label{code:fram:score} \\
    \Return $F$;
  \end{algorithmic}
\end{algorithm}

\begin{remark}
Regarding Algorithm \ref{alg:WIL}:
\begin{itemize}
\item
The diagonal entries of $A$ and $\hat{M}$ are set to 0 in order to void self-reinforcement.

\item
When handling a large network in practice, it is adequate to approximate $\hat{W}$ by $\sum_{k=1}^{m}\alpha^k(\hat{D}^{-1}A)^k$, rather than to calculate $\hat{W}$. 

\item
For the hard classification problem, we pick the column index of the maximum score in vector $F_{i,*}$ as the label of node $i \in U.$ We remain the value of vector $F_{i,*}$ as the relative probability for mixed membership setting.
\end{itemize}
\end{remark}

\begin{remark}
Two tuning parameters, $\rho$ and $\alpha$, must be chosen. 
\begin{itemize}
\item
$\rho \in [0,1]$ is a weight parameter that can be chosen from labeled  data without adding too much computational load. More details will be given in the simulation and real-data analysis below. 
 
\item 
We recommend setting $\alpha=e^{-0.25}$. This has been found to be a good choice for a host of scenarios. Although $\alpha$ can also be tuned with training data, we find that WIL is robust to $\alpha.$ However, if one has enough training data and time consuming is not a major concern, then $\alpha$ should be tuned via cross-validation.
\end{itemize}
\end{remark}

\subsection{Derivation of WIL from information propagation}

The formulation of WIL is motivated by the idea of information propagation. WIL is a combination of two different kinds of information propagation processes. 

First, we consider $\hat{F}=\hat{W}Y=(I-\alpha \hat{D}^{-1} A)^{-1}Y$, which can also be  written as an information propagation process:

\begin{algorithm}[H]
  \caption{ Information Propagation with Normalization on Targeting Points}
  \label{alg:Norm_targ}
  \begin{algorithmic}[1]
    \Require
     Adjacency matrix: $A_{(n\times n)}$; 
     Known labels matrix: $Y_{(n\times K)}$;
     Parameters: $\alpha$
    \Ensure
      Label score matrix: $\hat{F}_{(n\times K)}$ 
    \State Calculate the degree matrix: $\hat{D}=diag(A\textbf{1}\textbf{1}^{T})$;
    \label{code:fram:degree1}
    \State Iterate $\hat{F}(t+1)=\alpha \hat{D}^{-1}A\hat{F}(t)+(1-\alpha)Y$ until convergence;
    \label{code:fram:iterate1}\\
    \Return $\hat{F}$;
  \end{algorithmic}
\end{algorithm}
Algorithm \ref{alg:Norm_targ}, which is also mentioned by \cite{r608}, can be understood intuitively in terms of spreading label information with normalization on targeting points. We use adjacency matrix $A$ to spread the label, while we normalize rows of $A$ with degree of nodes. Figure \ref{fig:target} gives a toy example of one-step information propagation from node $i$'s neighbors to node $i$ in Algorithm \ref{alg:Norm_targ}.

\begin{figure}[H]
\centering
  \includegraphics[width=0.5\linewidth]{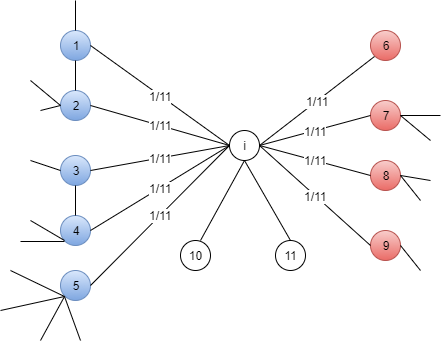}
\caption{Illustration of Algorithm \ref{alg:Norm_targ}: all edges are re-weighted as $1/11$, and after this propagation, $\hat{F}_i=(5/11, 4/11)$, which means that $i$ is more likely to be blue.}\label{fig:target}
\end{figure}

$\tilde{F}=\hat{W}^TY=(I-\alpha  A\hat{D}^{-1})^{-1}Y$ can be written aster another information propagation algorithm:

\begin{algorithm}[H]
  \caption{ Information Propagation with Normalization on Sourcing  Points}
  \label{alg:Norm_sou}
  \begin{algorithmic}[1]
    \Require
     Adjacency matrix: $A_{(n\times n)}$; 
     Known labels matrix: $Y_{(n\times K)}$;
     Parameters:$\alpha$
    \Ensure
      Label score matrix: $\tilde{F}_{(n\times K)}$ 
    \State Calculate the degree matrix: $\hat{D}=diag(A\textbf{1}\textbf{1}^{T})$;
    \label{code:fram:degree2}
    \State Iterate $\tilde{F}(t+1)=\alpha A \hat{D}^{-1}\tilde{F}(t)+(1-\alpha)Y$ until convergence;
    \label{code:fram:iterate2}\\
    \Return $\tilde{F}$;
  \end{algorithmic}
\end{algorithm}
The main difference in Algorithm \ref{alg:Norm_sou} is that we normalize the label information by the degree of sourcing points. A toy example of one-step information propagation is given in Figure \ref{fig:source}.

\begin{figure}[H]
\centering
  \includegraphics[width=0.5\linewidth]{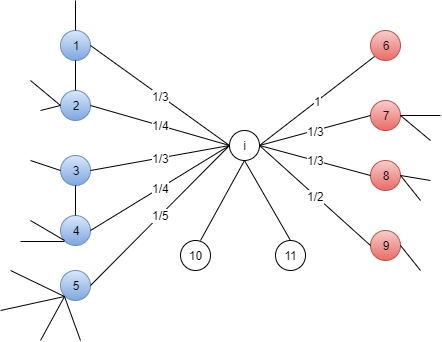}
\caption{Illustration of Algorithm \ref{alg:Norm_sou}: each edge is re-weighted by according to the degree of the node that is connected node $i$, and after this propagation, $\tilde{ F}_i=(41/30, 13/6)$, which means node $i$ is more likely to be red.}\label{fig:source}
\end{figure}

We raise examples for the above two different kinds of information propagation processes. For Algorithm \ref{alg:Norm_targ}, we take node $i$ as a student whose total social time is limited, so the influence from a certain friend would be averaged by the number of friends node $i$ has. In Algorithm \ref{alg:Norm_sou}, the social time of node $i$'s friends is also limited, so their influence should been normalized by their own degrees. Additionally, it is easy to show that bias introduced by imbalance is eliminated in some kind by the normalization in Algorithm \ref{alg:Norm_sou}. However, in the real world, it is not easy to tell exactly how does information propagate in the network. Therefore, we combine these two different kinds of information propagation in Algorithm \ref{alg:WIL} by introducing a weight parameter $\rho.$

\subsection{Derivation of WIL from the regularization framework}
Now, we develop regularization frameworks for the above two iteration algorithms. The cost function associated with $\hat{F}$ (Algorithm \ref{alg:Norm_targ}) is
\begin{equation} \label{eq:nor_target}
Q_1(F)=\frac{1}{2}(\sum_{i,j=1}^nA_{i,j}\|F_i-F_j\|^2 + \mu \|\hat{D}^{\frac{1}{2}}(F-Y)\|^2),
\end{equation}
here $\mu>0$ is a constant parameter. Set $\bar{F}=arg \min\limits_{F} Q_1(F).$  Then 
$$0=\frac{\partial Q_1 }{\partial F}\bigg|_{F=\bar{F}}=\hat{D}^{\frac{1}{2}}(\bar{F}-\frac{1}{1+\mu}\hat{D}^{-1}A\bar{F}-\frac{\mu}{1+\mu}Y).$$ 
Set $\alpha=\frac{1}{1+\mu},$ we have
$$\bar{F}=(1-\alpha)(I-\alpha \hat{D}^{-1}A)^{-1}Y,$$
which is the closed form of Algorithm \ref{alg:Norm_targ}.

Similarly, the cost function for $\tilde{F}$ (Algorithm \ref{alg:Norm_sou}) can be written as 
\begin{equation} \label{eq:nor_source}
Q_2(F)=\frac{1}{2}(\sum_{i,j=1}^nA_{i,j}\|\frac{1}{d_i} F_i-\frac{1}{d_j} F_j\|^2 + \mu \|\hat{D}^{-\frac{1}{2}}(F-Y)\|^2),
\end{equation}
here $\mu>0$ is a constant parameter. Set $\bar{F}=arg \min\limits_{F} Q_2(F).$  Then 
$$0=\frac{\partial Q_2}{\partial F}\bigg|_{F=\bar{F}}=\hat{D}^{-\frac{1}{2}}(\bar{F}-\frac{1}{1+\mu}A\hat{D}^{-1}\bar{F}-\frac{\mu}{1+\mu}Y).$$ 
Set $\alpha=\frac{1}{1+\mu},$ we have
$$\bar{F}=(1-\alpha)(I-\alpha A\hat{D}^{-1})^{-1}Y,$$
which is the closed form of Algorithm \ref{alg:Norm_sou}.

Since we have assumed nodes with the same labels are more likely to be connected, a good classifying function should not change too much between linked points. The first term in both cost functions is the smoothness constraint. However, in $Q_1$ we simply calculate the $l_2$ norm between $F_i$ and $F_j$, while we normalize $F_i$ and $F_j$ by their respective degrees respectively in $Q_2$.

The second term is the fitting constraint, which means a good classifying function should not change too much from the initial label assignment. In $Q_1$, the fitting constraint can be written as $\sum_{i=1}^{n}\hat{d}_i\|F_i-Y_i\|^2.$ As for any node $j \in L$, predicting the $j's$ label wrongly will lead to $\hat{d}_j$ punishment. This is explained by Fig. \ref{fig:target} in which node $j$ with a larger degree gives more information that is wrong . However, in $Q_2$, the second term is $\sum_{i=1}^{n}\frac{1}{\hat{d}_i}\|F_i-Y_i\|^2$, which means the wrong information is normalized by the node's degree. It can be understood from Fig \ref{fig:source} that the impact of one node on another is $\frac{1}{degree}.$

$\mu$ makes a trade-off between two competing constraints. Since $\alpha=\frac{1}{1+\mu}$, a large $\mu$ means more weight on the fitting constraint, which means a small $\alpha$ in both Algorithm \ref{alg:Norm_targ} and Algorithm \ref{alg:Norm_sou}.

\subsection{Derivation of WIL from the first hitting time}
In the NSSL problem, it is essential to properly measure the closeness or similarity between every pair of nodes. One such possible measure is the  \emph{first hitting time} between nodes, if we cast the network into the context of random walks. This is partially motivated by the fact that random walks are easily trapped within nodes having the same labels. 

Consider a random walk in a network: starting from a node, one of its edges is chosen with equal probability. Let $t_{i,j}$ denote the first hitting time from node $i$ to node $j$. Then $E(\exp(-\tau t_{i,j}))$ is a good local similarity measure between the two nodes, where the exponential transformation emphasizes the local information by down-weighting the long first hitting time. However, $E(\exp(-\tau t_{i,j}))$ is very difficult to calculate, so we approximate it by 
$$
H=\sum_{k=1}^{\infty} \exp(-\tau k)(\hat{D}^{-1}A)^k = \sum_{k=1}^{\infty} \alpha^k (\hat{D}^{-1}A)^k, \qquad \mbox{where} \quad \alpha = e^{-\tau},
$$ 
It is easy to see that $H=( I - \alpha \hat{D}^{-1} A )^{-1}-I$. In this approximation, instead of counting only the first hitting time, we count all hitting times. Since $\exp(-\tau k)$ is very small when $k$ is large, the approximation is reasonable. In addition, we notice that $H$ is an asymmetric matrix and $h_{i,j}$ stands for the impact of node $j$ on node $i$. Finally, we add up $h_{i,j}$ and $h_{j,i}$ with weights $\rho$ and $1-\rho$ respectively to measure the similarity between nodes $i$ and $j$.

\section{Main Results}\label{sect_consistency}
In this section, we first propose a generative model called the partially labeled DCBM (pDCBM). Then, under the pDCBM frame work, we show the theoretical guarantee of the WIL algorithm and its phase boundary. 

\subsection{Generative Model}\label{sect_model}
First, we introduce a prior distribution for the labels, a K-dimensional vector $\pi$ with $\sum_{i=1}^{K}\pi_i=1,$ to generate the labels. Let $B$ be the $K\times K$ symmetric probability matrix with $0\leq b_{i,j}\leq 1, \ \forall \ i,\ j \in \{1,2,\dots,K\}.$ Set membership vectors $z_i \in \{0,1\}^K,$ where $z_{i,k}=1$ indicates that node $i$ belongs to label $k$. The DCBM introduces a set of degree-corrected parameters $\{\theta_i: i=1, \dots , n\}.$ As the given labels might be incorrect, we can also introduce the parameter $\nu \in [0,1]$ to represent the probability that the given labels are correct. Let $C$ be the $K\times K$ matrix where $C_{kk}=\nu$ for all $k,$ and $C_{kl}=\frac{1-\nu}{K-1}$ for $k \neq l.$ Let $A$ be the adjacency matrix of $G.$ We can define the generation process as follows:
\begin{itemize}
\item Prior distribution of labels: $z_i \sim Mult(\cdot | \pi).$
\item For each node pair $(i,j)$, $A_{i,j} \sim Bern(\cdot |\theta_i \theta_j z_i B z_j^T).$
\item For each labeled node, $y_i \sim Mult(\cdot | z_iC).$
\end{itemize}

\begin{remark} Regarding the pDCBM:
\begin{itemize}
\item The distribution of labels follow the multi-normal distribution with $\pi.$ When $n$ is large enough, the distribution is relatively stable, so we can simply ignore the randomness in this step when performing statistical analysis.
\item $\nu$ determines the credibility of given labels. However, we analyze the consistency by setting $\nu=1.$ The proof can be easily extended to $\nu<1.$ 
\end{itemize}
\end{remark}

\subsection{Main Results}
Before presenting the main theories, we give useful notation first. For the pDCBM setting, we set $B=qE_K+(p-q)I_K,$ where $E_K$ is a $K\times K$ matrix with elements all equal to $1$ and $p>q.$ We write $c(i)=k$ when the $i's$ label is $k$ and $\hat{c}(i)=k$ when the $i's$ predicted label is $k$ by applying the WIL algorithm with proper $\alpha$ and $\rho$. We set $n_i=|\{j|j\in G \and c(j)=i\}|$ and $l_i=|\{j|j \in L \and c(j)=i\}|.$ Let the known ratio $\delta=\frac{|L|}{|G|}.$ We consider the average error rate $err=\frac{1}{(1-\delta) n}\sum_{i \in U}1_{\hat{c}(i) \neq c(i)}$ and try to prove the weak consistency: for any $\epsilon > 0$, $P(err \geq \epsilon )=f(n,\epsilon)\rightarrow 0.$

The main results of this paper are given as follows:

\begin{theorem} \label{homo}
Under the pDCBM setting, with $K=2$, $\max_i\{\theta_i\}=1$, $\min_i\{\theta_i\}\geq \epsilon_1 > 0$, where $\epsilon_1$ is a constant, $\forall \  u \in [K], \frac{1}{n_u}\sum_{c(i)=u}\theta_i\in [1-\delta_1,1],$ where $\delta_1=o(1)$, $\frac{1}{l_u}\sum_{c(i)=u, i\in L}\theta_i\in [1-\delta_1,1]$,  $d=np=\Omega (\log n) \ and \ s >  g(\beta), \forall \  constant \ \epsilon >0,$ there exist some constant $c >0, \alpha>0 $ and $\rho \geq 0.$ We have:
 
$$\mathbb{P}(err \geq \epsilon )\leq \frac{c}{\epsilon ^2(1-\beta)^2s^2\delta^2d},$$

where  $g(\beta)=\frac{1}{2\beta }((\beta-1)+\sqrt{4\beta ^3+\beta^2-2\beta +1})$, $0< \beta=\frac{q}{p} <1$ and  $0< s=min\{\frac{n_1}{n_2},\frac{n_2}{n_1}\}\leq 1.$
\end{theorem}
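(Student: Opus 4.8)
The plan is to reduce the two-class decision to the sign of a single signed score, to show that the population (expected) version of this score separates the two communities with a quantitative margin whenever $s>g(\beta)$, and then to control the fluctuation of the score around its population value using the sparsity assumption $d=\Omega(\log n)$; a second-moment (Chebyshev-type) inequality then converts a margin-versus-fluctuation comparison into the stated bound.

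Concretely, write $g=Y_{\cdot,1}-Y_{\cdot,2}\in\{-1,0,+1\}^n$ (so $g_j=0$ on $U$) and let $z\in\{\pm1\}^n$ be the true community indicator. With $M=\rho W+(1-\rho)W^{T}$ the decision is $\hat c(i)=1$ iff $(Mg)_i>0$, so a node $i$ with $c(i)=1$ is misclassified only if $(Mg)_i<0$, and symmetrically for $c(i)=2$. Hence it suffices to analyse the scalar statistic $s_i=z_i(Mg)_i$ and to show it is positive with high probability. First I would replace $A$ by $\bar A$ (with $\bar A_{ij}=\theta_i\theta_j B_{c(i)c(j)}$) and $\hat D$ by the expected degree matrix, producing the population operators $\bar P=\bar D^{-1}\bar A$, $\bar W=\sum_{k\ge0}\alpha^k\bar P^{\,k}$ and $\bar M$, and define the population score $\bar s_i=z_i(\bar M\,\mathbb E[g])_i$ with $\mathbb E[g]=\delta z$.

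The core step, and the main obstacle, is the population margin bound. Because $B=qE_K+(p-q)I_K$ has rank-two block structure, $\bar P$ acts almost diagonally on the pair $(\mathbf 1,z)$, but under imbalance $z$ is no longer an exact eigenvector: a direct computation gives, for $i\in C_1$, $(\bar P z)_i\approx\frac{n_1p-n_2q}{n_1p+n_2q}$ and the analogous expression with the roles of the two communities reversed for $i\in C_2$, so the row- and column-normalised resolvents $\bar W$ and $\bar W^{T}$ amplify the signal asymmetrically between the large and the small community. I would expand $\bar W$ as the geometric series, track the $C_1/C_2$ block values through all powers of $\bar P$, and choose $\alpha$ and $\rho\ge0$ (weighting the source-normalised part $\bar W^{T}$ that corrects imbalance) so that $\bar s_i\ge m$ for every $i$, where $m\ge c_1(1-\beta)s\delta$ for some constant $c_1>0$. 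The inequality $s>g(\beta)$ is exactly the threshold at which this optimised margin can be made positive for the smaller community, the same computation underlying the transition boundary of Theorem~\ref{boundary}; verifying that $g(\beta)$ is the correct critical value, rather than the cruder $s>\beta$ coming from the first-order term alone, is the delicate part, since it requires the full series and not merely the $k=1$ contribution.

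For the concentration step I would use $d=np=\Omega(\log n)$ to obtain, via Bernstein and matrix-concentration inequalities, that the degrees concentrate and that $\|\hat P-\bar P\|$ is small; feeding this into the resolvent identity $W-\bar W=\alpha\,W(\hat P-\bar P)\bar W$ and its transpose shows that $M$ is a Lipschitz function of $\hat P$ on the region $\|\alpha\hat P\|<1$, which holds since $\alpha<1$ and $\hat P$ is substochastic. A direct variance computation for $s_i-\bar s_i$, using that each row of $\hat P$ perturbs $\bar s$ by an amount of order $1/\sqrt d$, yields that $\sum_{i\in U}\mathbb E[(s_i-\bar s_i)^2]$ is of order $(1-\delta)n/d$ up to the relevant constants. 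Finally, since a misclassified node must satisfy $|s_i-\bar s_i|\ge m$, I have $1_{\hat c(i)\neq c(i)}\le(s_i-\bar s_i)^2/m^2$; summing over $i\in U$, dividing by $(1-\delta)n$, and applying a second-moment (Chebyshev-type) inequality to $err$ converts the aggregate fluctuation and the squared margin $m^2\ge c_1^2(1-\beta)^2s^2\delta^2$ into $\mathbb P(err\ge\epsilon)\le c/(\epsilon^2(1-\beta)^2s^2\delta^2 d)$. I expect the margin and boundary analysis of the third paragraph to be by far the hardest part; the concentration and the final inequality are comparatively routine once $d=\Omega(\log n)$ is in force.
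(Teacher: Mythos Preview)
Your high-level plan matches the paper's: reduce to a signed score, establish a population margin of order $(1-\beta)s\delta$ under $s>g(\beta)$, control fluctuations at scale $O(1/\sqrt d)$, and close with a second-moment inequality. The execution differs in two places. For the population margin you propose to track block values through the full power series of $\bar P$; the paper instead fixes $\rho=0$ (so only the $A\hat D^{-1}$ kernel enters) and, because $PD^{-1}$ is block-constant, solves the resolvent $(I-\alpha PD^{-1})^{-1}Y$ as a $2\times2$ linear system via Cramer's rule, obtaining the closed form~\eqref{eq_irw}; the threshold $s>g(\beta)$ is then read off directly as the positivity condition on the numerator $s+\beta s^2-\beta s-\beta^2$, with no series bookkeeping. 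For the fluctuation you invoke matrix concentration for $\|\hat P-\bar P\|$ plus a resolvent identity; the paper proceeds more elementarily, first replacing $\hat D$ by the deterministic $D$ via scalar degree concentration, and then bounding $\mathbb E$ and $\mathrm{Var}$ of each $[(AD^{-1})^kY]_i$ by a direct path-counting argument, which yields a \emph{per-node} Chebyshev bound $\mathbb P(\hat c(i)\neq c(i))\le c/((1-\beta)^2s^2\delta^2 d)$ and hence controls both $\mathbb E[err]$ and $\mathrm{Var}(err)$ for the final Chebyshev step. Your aggregate route (bound $\sum_i(s_i-\bar s_i)^2$ and apply Markov to $err$) would give a $1/\epsilon$ rather than $1/\epsilon^2$ dependence, which for $\epsilon<1$ is stronger and so still implies the stated bound; the paper's route buys simplicity (only scalar Bernstein, no operator norms) at the cost of a slightly weaker $\epsilon$-dependence.
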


\begin{remark}Regarding Theorem \ref{homo}:
\begin{itemize}
\item
We can replace $\min_i\{\theta_i\}\geq \epsilon_1 > 0$ by $\min_i\{d_i\}=O(d)=\Omega (\log n).$ Both ensure that the degree is not too small for prediction.

\item
As long as $K$ is a given constant that does not tend to infinity with $n$, the result is still correct. It can be easily proved by following the proof with $K=2.$ We discuss the effect of $K$ in Appendix B.

\item
We obtain $g(\beta)=\frac{1}{2\beta }((\beta-1)+\sqrt{4\beta ^3+\beta^2-2\beta +1})$ by setting $\rho=0$ in WIL. When $\rho \in [0,1]$, $g(\beta)_{\rho}=\rho \beta + (1-\rho)\frac{1}{2\beta }((\beta-1)+\sqrt{4\beta ^3+\beta^2-2\beta +1})$ and it can be proved that $g(\beta)_{\rho}\geq g(\beta), \ \beta \in (0,1).$ 

\item 
Parameter $\alpha$ in WIL is absorbed into $c$ in the main result. Throughout the proof, we find that it is possible to estimate the optimal $\alpha.$ However, we find that WIL is quite robust against $\alpha$. Hence we recommend setting a default $\alpha.$ One can still learn $\alpha$ from training data (the labeled nodes) if time is of no concern and enough training data are available.
\end{itemize}
\end{remark}

The technical proof of Theorem \ref{homo} is given in Appendix A.

The following theorem compares the phase boundaries for the unbalanced ratios of random walk (Algorithm \ref{alg:Norm_targ}), normalized Laplacian \cite{r608} (replacing $A\hat{D}^{-1}$ in Algorithm \ref{alg:Norm_sou} by $\hat{D}^{-\frac{1}{2}} A \hat{D}^{-\frac{1}{2}}$) and Algorithm \ref{alg:Norm_sou}.

\begin{theorem}\label{boundary}
Under the pDCBM setting, with $\theta_i=1\ \forall i \in \{1,2,\dots,n\},\ d=np=\Omega(\log n),\ K=2,$  $e_1\rightarrow 0 \Leftrightarrow s-\beta>0$; $e_2\rightarrow 0 \Leftrightarrow s+\beta s^2-\beta\sqrt{(\beta+s)(1+\beta s)}>0$; $e_3\rightarrow 0 \Leftrightarrow s+\beta s^2-\beta s-\beta^2>0,$ where $e_1,\ e_2,\ e_3$ are the average errors of the prediction when random walk, normalized Laplacian and Algorithm \ref{alg:Norm_sou} are applied respectively.
\end{theorem}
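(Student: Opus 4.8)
\textit{Proof proposal.}

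The plan is to reduce the whole comparison to a population, community-level computation and to read off each phase boundary from the sign of a single scalar gap. Because $\theta_i=1$ for all $i$, the model is an ordinary two-block SBM, so all nodes in a given community are exchangeable. Assume without loss of generality that community $1$ is the minority, so $s=n_1/n_2\le 1$. First I would invoke the concentration machinery already developed for Theorem \ref{homo} (valid since $d=np=\Omega(\log n)$): the random score matrices produced by each method concentrate around their population (expected-adjacency) counterparts, uniformly over nodes, with relative fluctuation $O(\sqrt{\log n/d})=o(1)$. Consequently, if the population score gap between the correct and incorrect class is bounded below by a positive constant for every minority node, then with high probability all minority nodes are labelled correctly and the corresponding error tends to $0$; whereas if that population gap is nonpositive, then by exchangeability a constant fraction (the whole minority, of size $\Theta(s n_2)$) is misclassified and the error stays bounded away from $0$. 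Thus each ``$e_k\to 0\Leftrightarrow\cdots$'' reduces to determining the sign of one population gap.

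Next I would carry out the population reduction. Writing $P=\hat D^{-1}A$ for the (row-stochastic) random-walk matrix, the other two operators are diagonal conjugates of $P$: $A\hat D^{-1}=\hat D P\hat D^{-1}$ and $\hat D^{-1/2}A\hat D^{-1/2}=\hat D^{1/2}P\hat D^{-1/2}$, so their $k$-th powers equal $\hat D^{\gamma}P^k\hat D^{-\gamma}$ with $\gamma\in\{0,\tfrac{1}{2},1\}$ and contribute only node-dependent prefactors that are common across the two class columns and hence irrelevant to classification. In expectation $(P^k)_{ij}\approx(\bar P^{\,k})_{c(i)c(j)}/n_{c(j)}$, where $\bar P$ is the $2\times 2$ community transition matrix with $\bar P_{11}=\frac{s}{s+\beta}$, $\bar P_{12}=\frac{\beta}{s+\beta}$, $\bar P_{21}=\frac{s\beta}{1+s\beta}$, $\bar P_{22}=\frac{1}{1+s\beta}$. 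Since the labelled set is a fraction $\delta$ of each community ($l_c=\delta n_c$) and all labels are correct ($\nu=1$), summing $\sum_{k\ge1}\alpha^k\bar P^{\,k}=(I-\alpha\bar P)^{-1}-I$ (the diagonal being removed per Algorithm \ref{alg:WIL}) shows that the class-$c$ score of a community-$a$ node is proportional to $w_c\big([(I-\alpha\bar P)^{-1}]_{ac}-\mathbf{1}\{a=c\}\big)$, with column weights $w_c=1$ for random walk (Algorithm \ref{alg:Norm_targ}), $w_c=\bar d_c^{-1/2}$ for the normalized Laplacian, and $w_c=\bar d_c^{-1}$ for Algorithm \ref{alg:Norm_sou}, where $\bar d_1=n_1p+n_2q$ and $\bar d_2=n_1q+n_2p$.

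Then I would turn the minority classification into a scalar inequality. Using $[(I-\alpha\bar P)^{-1}]_{12}=\alpha\bar P_{12}/\det(I-\alpha\bar P)$ and the identity $[(I-\alpha\bar P)^{-1}]_{11}-1=\alpha(\bar P_{11}-\alpha\det\bar P)/\det(I-\alpha\bar P)$, and noting $\det(I-\alpha\bar P)>0$ and $\det\bar P>0$ for $\beta\in(0,1)$, the minority node is classified correctly iff $w_1(\bar P_{11}-\alpha\det\bar P)>w_2\bar P_{12}$. The left-hand side is strictly decreasing in $\alpha$, so this holds for some admissible $\alpha$ (robustly, by a constant margin) exactly when it holds in the limit $\alpha\to0^+$, i.e. when $w_1\bar P_{11}>w_2\bar P_{12}$. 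Substituting the three weight choices together with the entries of $\bar P$ and $\bar d_c$ and clearing denominators yields, after routine algebra, precisely $s-\beta>0$ for random walk, $\beta s^3+s^2-\beta^2 s-\beta^3>0$ (equivalently $s+\beta s^2-\beta\sqrt{(\beta+s)(1+\beta s)}>0$ after squaring) for the normalized Laplacian, and $s+\beta s^2-\beta s-\beta^2>0$ for Algorithm \ref{alg:Norm_sou}. I would finish by checking that the majority community is classified correctly throughout the relevant region, via the analogous inequality $w_2\bar P_{22}>w_1\bar P_{21}$, so that the error is governed solely by the minority and each equivalence is sharp.

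The hard part will be the concentration step rather than the algebra. Even though I intend to borrow it from the proof of Theorem \ref{homo}, making the reduction in the first paragraph rigorous requires controlling the random resolvent $(I-\alpha\hat D^{-1}A)^{-1}$ under the perturbation $A-\mathbb{E}A$ with only $d=\Omega(\log n)$: one needs the spectral-norm bound $\|A-\mathbb{E}A\|=O(\sqrt d)$ together with a resolvent/Neumann-series expansion and tail control to show the per-node scores stay within $o(1)$ (relative) of their population values uniformly. A related subtlety is that I cannot literally send $\alpha\to0$ (the scores would vanish into the noise); instead I would fix a small constant $\alpha$, for which the strict population inequalities above still give a gap of constant order that dominates the $O(\sqrt{\log n/d})$ fluctuations, preserving the sign of the gap with high probability.
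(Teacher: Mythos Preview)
Your proposal is correct and reaches the same three boundaries as the paper, but the population computation is organized differently. The paper obtains the three numerators directly via an $n\times n$ block-Cramer computation (Lemma~\ref{lemma_det} applied separately to $D^{-1}P$, $D^{-1/2}PD^{-1/2}$, and $PD^{-1}$, yielding the explicit formulas in Corollary~\ref{cor_eq}), and then reads off each sign condition from the $\alpha$-independent leading term. You instead collapse everything to the $2\times 2$ community transition matrix $\bar P$, observe once that the three kernels are diagonal conjugates $\hat D^{\gamma}P\hat D^{-\gamma}$, $\gamma\in\{0,\tfrac12,1\}$, and so differ only by column weights $w_c=\bar d_c^{-\gamma}$; the single inequality $w_1\bar P_{11}>w_2\bar P_{12}$ then delivers all three boundaries at once. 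Your route is more conceptual and explains \emph{why} the boundaries are ordered (larger $\gamma$ tilts the comparison toward the minority), while the paper's route produces the full $\alpha$-dependent expressions in one shot, which are then reused verbatim in the ``only if'' direction.

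For concentration, the paper does not use a spectral-norm/resolvent argument; it uses the moment route of Lemmas~\ref{lemma_degree}--\ref{lemma_var} (degree concentration, then $\mathbb{E}[(D^{-1}A)^kY]_i=[(D^{-1}P)^kY]_i+O(1/d)$ and $\mathrm{Var}=O(1/d)$ by path counting) followed by Chebyshev, exactly as in Theorem~\ref{homo}. Since you explicitly say you would borrow that machinery, your plan goes through; the spectral alternative you sketch in the last paragraph would also work but is not what the paper does. Your handling of $\alpha$ (fix a small constant rather than literally send $\alpha\to0$) matches the paper's implicit quantification: the forward direction needs $\alpha$ small enough that the leading term dominates, and the converse holds for every $\alpha$ because the $-\alpha s(1-\beta^2)$ correction only pushes the numerator further negative.
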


The following figure shows  the boundaries of the three algorithms described in the above theorem.

\begin{figure}[H] 
\includegraphics[width=.6\linewidth]{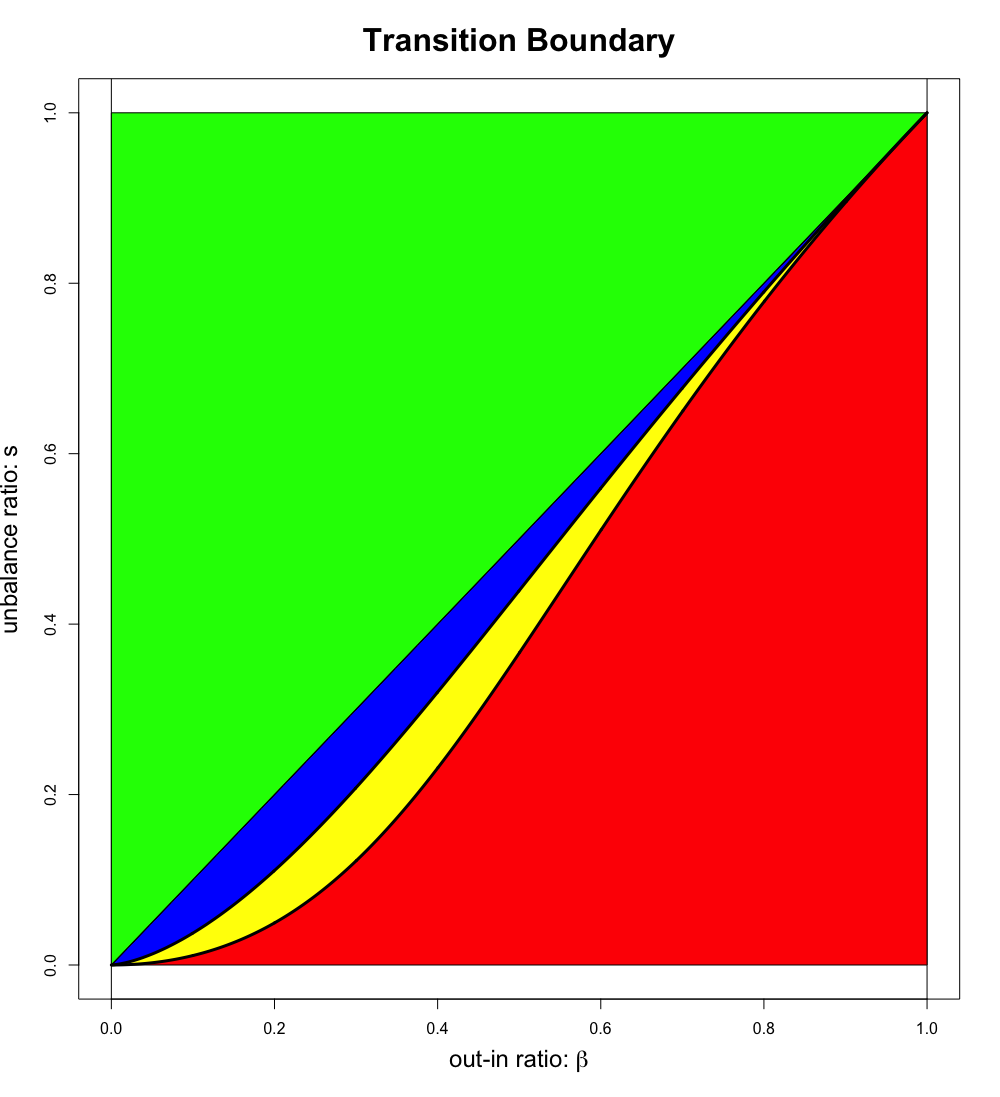}
\caption{All three algorithms can predict node labels properly in the green area of the graph. Normalized Laplacian can handle the blue area while random work cannot. However, the yellow area in the graph can only be solved by Algorithm \ref{alg:Norm_sou}. The red area is too unbalanced so that none of the three algorithms can perform better than random guess.}
\end{figure}
 
 Theorem \ref{boundary} indicates that Algorithm \ref{alg:Norm_sou} can tackle scenarios of greater imbalance than the other two methods, which is also observed in the empirical study. 

Although Algorithm \ref{alg:Norm_sou} gives sharper phase boundaries for unbalanced networks, in our simulation, Algorithm \ref{alg:Norm_targ} has its own advantage when applied to balanced networks. We speculate that it is because Algorithm \ref{alg:Norm_targ} does not rely on degree information as strongly as Algorithm \ref{alg:Norm_sou} does, so the former might perform better in balanced networks and degree-corrected networks,  where the degree information contains more noise than useful information. That is why we retain Algorithm \ref{alg:Norm_targ} in WIL hoping we can learn the proper $\rho$ from the data itself. $\rho$ is supposed to be a trade-off for the importance of degree's information.

\section{Simulation}\label{sect_simu}
\subsection{Network generation scheme, performance measure and default parameters}\label{modelsetting}

Throughout the simulation studies, we use the pDCBM to generate networks with 2,000 nodes and two communities. We follow the simulation scheme in \cite{r1}. The community labels of nodes are outcomes of independent multinomial draws with $\pi = (\pi_1, \pi_2)$. Conditional on these labels, the edges are generated as independent Bernoulli variables with $p=B_{c(i), c(j)}$, while under the heterogeneous setting, $p=\theta_i\theta_j B_{c(i), c(j)}$. We use $\theta_i$ to represent the popularity of node $i$ and $\theta_i$'s are drawn independently with $P(\theta = 0.2) = \gamma$ and $P(\theta = 1) = 1-\gamma$. We consider two settings, namely $\gamma = 0$ and $\gamma = 0.9$, which correspond to the homogeneous setting and the heterogeneous setting respectively.

The block probability matrix $B$ is determined by two parameters: the overall edge density $\lambda$ and the out-in ratio $\beta$. $\lambda$ is indeed $E(degree)$. It ranges from $2$ to $12$ in our simulations and a small $\lambda$ indicates a sparse network. $\beta$ determines the ratio of inter- to intra-community connection probabilities, and is set between $0.04$ and $0.4$. To generate $B$, we first generate $B^{(0)}$, whose diagonal and off-diagonal entries are set to $\beta^{-1}$ and $1$ respectively. Then $B^{(0)}$ is rescaled so that $E(degree)=\lambda$. Specifically,
\begin{equation}
B = \frac{\lambda}{(n-1)(\pi^T B^{(0)}\pi)(\mathbb{E}\theta)^2}B^{(0)}.
\end{equation}

$\rho$ selection: In following simulations and real-data analysis, we first select $\rho \in \mathcal{P} =  \{0, 0.1, 0.2, \cdots, 0.9, 1\}$. By comparing the computing accuracy of labeled nodes, we select the $\rho$ with the best performance as the parameter to predict the unlabeled nodes in networks. 
\begin{equation}
\rho = \text{argmax}_{\rho \in \mathcal{P}} Accuracy(\rho).
\end{equation}
We repeat the random sampling code, and visualized the average choice of $\rho$ in different settings. In addition, we can also apply other optimization ideas to obtain a better $\rho$ but we will not discuss this in detail in this paper.   

All of simulations below 
adopt the same network generation scheme as that described above. We control the parameters $\lambda$, $\beta$, $\gamma$ and $\pi$ to simulate different settings. Under each parameter setting, we replicate the simulation process 50 times (unless otherwise stated) and report the average performance of various methods. 

\subsection{Comparison of methods}
We carry out extensive simulations to compare the WIL methods with the cutting-edge methods including algorithms introduced for graph-based semi-supervised learning (GSSL) in the literature. 
The following methods/algorithms are adopted for comparisons:
\begin{itemize}
\item Partially absorbing random walks (PARW) \cite{parw}, 
\item Learning with local and global consistency (LGCiter) \cite{r608} 
\item Semi-supervised learning using Gaussian fields and harmonic functions (HMNiter) \cite{r609}
\item Confidence-aware modulated label propagation (CAMLP) \cite{camlp},
\item New regularized algorithms for transductive learning (MAD) \cite{r611} 
\end{itemize}
\subsubsection{Degree-homogeneous setting}
We start the simulations under the homogeneous setting ($\gamma = 0$). We run three groups of simulations to test the methods, varying $\lambda$, $\beta$ or $\pi$ in each group. Specifically, for $\pi$, we use $\pi = (1/2- \Delta,1/2 + \Delta)$ with $\Delta$ varying between 0 and 0.4. Here $\Delta$ can be interpreted as the degree of imbalance in community size.

Figures \ref{fig:sbm_lambda_95} and \ref{fig:sbm_lambda_90} show the performance of the methods, as the networks change from sparse to dense. Figures \ref{fig:sbm_oir_95} and \ref{fig:sbm_oir_90} show their performance in terms of the change in the out-in ratio. Generally speaking, a larger $\beta$ means a smaller "contrast" in the observed networks and therefore more difficult tasks. On the other hand, we also consider varying $\pi$, because imbalance in group size could be an issue in real applications. Figures \ref{fig:sbm_diff_95} and \ref{fig:sbm_diff_90} show the performance of the tested methods pertaining to this issue. Figures \ref{fig:sbm_rho_95} and \ref{fig:sbm_rho_90} show the corresponding average selection of $\rho$ when $\pi$ is changing.
\begin{figure}[H]
    \centering
    \begin{subfigure}[H]{0.45\textwidth}
        \includegraphics[trim = 0 0 20 0, clip, width=\textwidth]{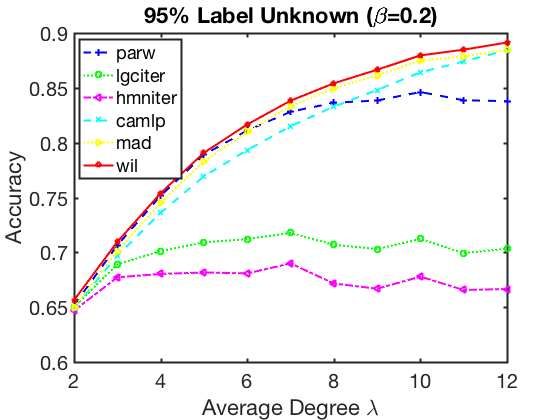}
        \caption{Accuracy with varying $\lambda$'s.}
        \label{fig:sbm_lambda_95}
    \end{subfigure}
    \begin{subfigure}[H]{0.45\textwidth}
        \includegraphics[trim = 0 0 20 0, clip, width=\textwidth]{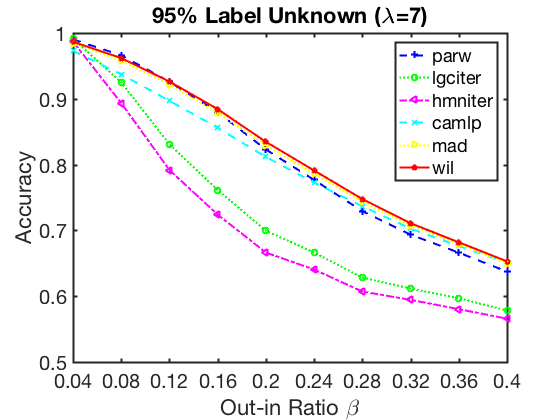}
        \caption{Accuracy with varying $\beta$'s.}
        \label{fig:sbm_oir_95}
    \end{subfigure}
\begin{subfigure}[H]{0.45\textwidth}
        \includegraphics[trim = 0 0 20 0, clip, width=\textwidth]{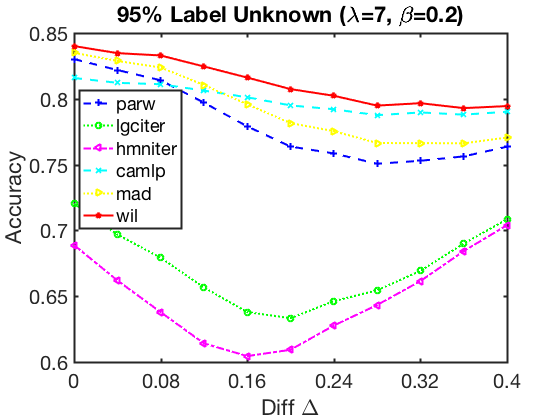}
        \caption{Accuracy with varying degrees of imbalance in community size.}
        \label{fig:sbm_diff_95}
    \end{subfigure}
    \begin{subfigure}[H]{0.45\textwidth}
        \includegraphics[trim = 0 0 20 0, clip, width=\textwidth]{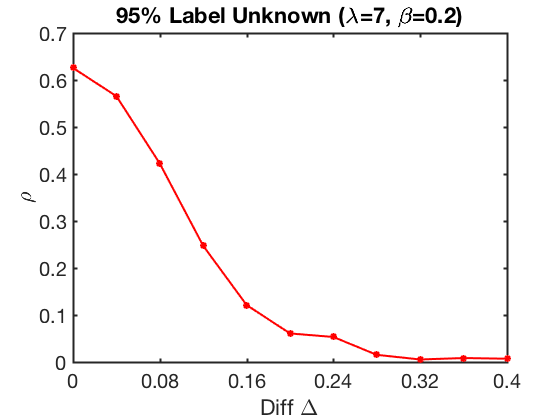}
        \caption{Average selection of $\rho$ with varying degrees of imbalance in community size.}
        \label{fig:sbm_rho_95}
    \end{subfigure}
    \caption{Comparison in the homogeneous setting with $95\%$ of the labels unknown: Networks are simulated from the pDCBM  with $n = 2000$, $K=2$, $\gamma=0$; in (a), $\beta = 0.2$ and $\pi = (1/2,1/2)$; in (b), $\lambda = 7$ and $\pi = (1/2,1/2)$; and in (c), $\lambda = 7, \beta = 0.2$, and $\pi = (1/2 -\Delta,1/2 + \Delta)$.}
\label{Fig:CompareSBM95}
\end{figure}

\begin{figure}[H]
    \centering
    \begin{subfigure}[H]{0.45\textwidth}
        \includegraphics[trim = 0 0 20 0, clip, width=\textwidth]{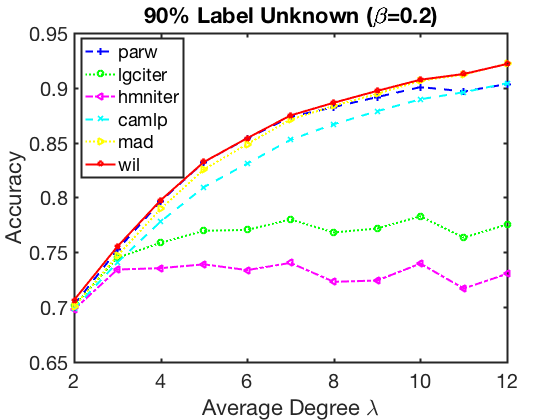}
        \caption{Accuracy with varying $\lambda$'s.}
        \label{fig:sbm_lambda_90}
    \end{subfigure}
    \begin{subfigure}[H]{0.45\textwidth}
        \includegraphics[trim = 0 0 20 0, clip, width=\textwidth]{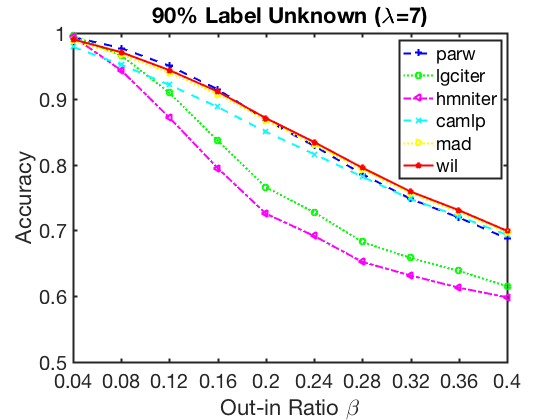}
        \caption{Accuracy with varying $\beta$'s.}
        \label{fig:sbm_oir_90}
    \end{subfigure}
\end{figure}
\begin{figure}[H]\ContinuedFloat
\begin{subfigure}[H]{0.45\textwidth}
        \includegraphics[trim = 0 0 20 0, clip, width=\textwidth]{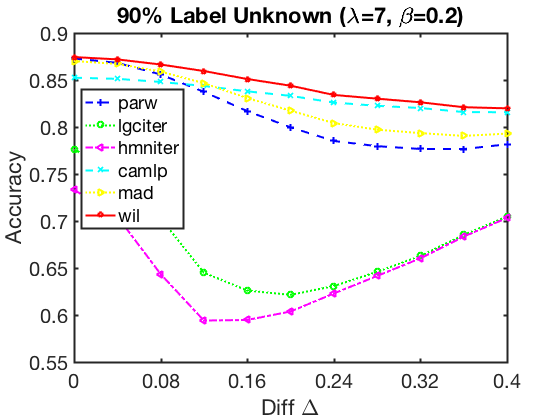}
        \caption{Accuracy with varying degrees of imbalance in community size.}
        \label{fig:sbm_diff_90}
    \end{subfigure}
    \begin{subfigure}[H]{0.45\textwidth}
        \includegraphics[trim = 0 0 20 0, clip, width=\textwidth]{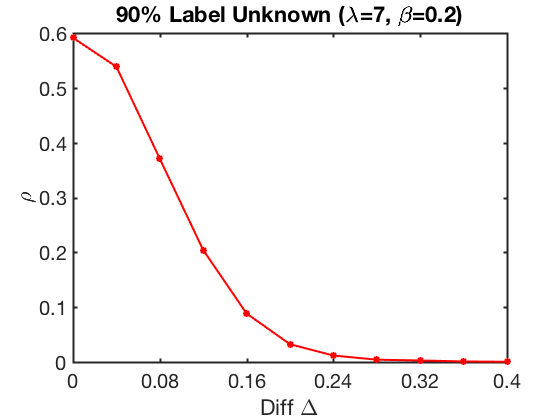}
        \caption{Average selection of $\rho$ with varying degrees of imbalance in community size.}
        \label{fig:sbm_rho_90}
    \end{subfigure}
    \caption{Comparison in the homogeneous setting with $90\%$ of the labels unknown: Networks are simulated from the pDCBM with $n = 2000$, $K=2$, $\gamma=0$; in (a), $\beta = 0.2$ and $\pi = (1/2,1/2)$; in (b), $\lambda = 7$ and $\pi = (1/2,1/2)$; and in (c), $\lambda = 7, \beta = 0.2$, and $\pi = (1/2 -\Delta,1/2 + \Delta)$.}
\label{Fig:CompareSBM90}
\end{figure}
Overall, the WIL method is more competitive in a more general setting, and it consistently ranks among the top methods. When the community sizes are inhomogeneous, the accuracy of WIL is among the best. At the same time, as the imbalance of community sizes increases, the average value of best choice of $\rho$ decreases, which confirms our theoretical analysis.

\subsubsection{Degree-heterogeneous setting}
We repeat the simulations above in the heterogeneous setting. The only difference here is that, in the generation of networks, 10\% of the nodes are hubs with high popularity. The results are shown in Figure \ref{Fig:CompareDCSBM90} and \ref{Fig:CompareDCSBM95}.

\begin{figure}[H]
    \centering
    \begin{subfigure}[H]{0.45\textwidth}
        \includegraphics[trim = 0 0 20 0, clip, width=\textwidth]{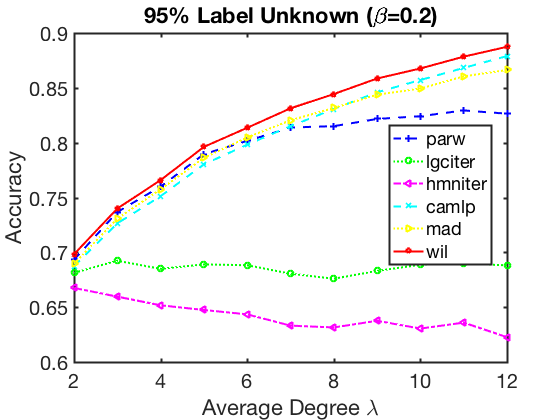}
        \caption{Accuracy with varying $\lambda$'s.}
        \label{fig:dcsbm_lambda_95}
    \end{subfigure}
    \begin{subfigure}[H]{0.45\textwidth}
        \includegraphics[trim = 0 0 20 0, clip, width=\textwidth]{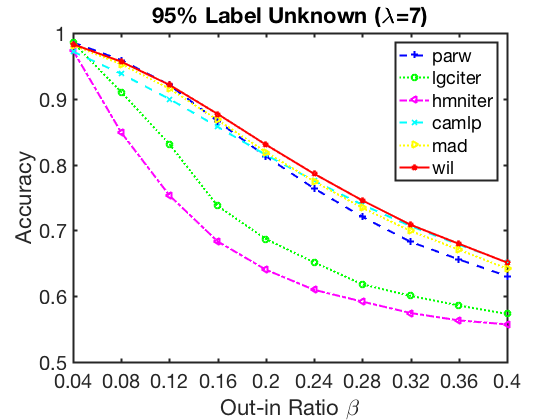}
        \caption{Accuracy with varying $\beta$'s.}
        \label{fig:dcsbm_oir_95}
    \end{subfigure}
\end{figure}
\begin{figure}[H]\ContinuedFloat
\begin{subfigure}[H]{0.45\textwidth}
        \includegraphics[trim = 0 0 20 0, clip, width=\textwidth]{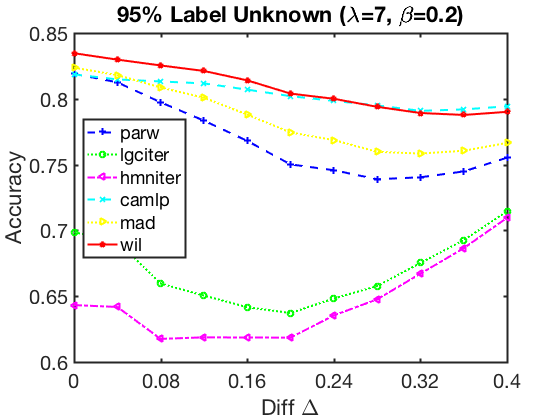}
        \caption{Accuracy with varying degrees of imbalance in community size.}
        \label{fig:dcsbm_diff_95}
    \end{subfigure}
    \begin{subfigure}[H]{0.45\textwidth}
        \includegraphics[trim = 0 0 20 0, clip, width=\textwidth]{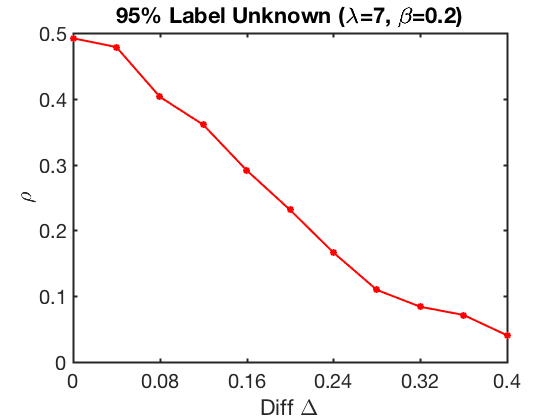}
        \caption{Average selection of $\rho$ with varying degrees of imbalance in community size.}
        \label{fig:dcsbm_rho_95}
    \end{subfigure}
    \caption{Comparison in the heterogeneous setting with $95\%$ of the labels unknown: Networks are simulated from the pDCBM with $n = 2000$, $K=2$, $\gamma=0.9$; in (a), $\beta = 0.2$ and $\pi = (1/2,1/2)$; in (b), $\lambda = 7$ and $\pi = (1/2,1/2)$; and in (c), $\lambda = 7, \beta = 0.2$, and $\pi = (1/2-\Delta,1/2 + \Delta)$.}
\label{Fig:CompareDCSBM95}
\end{figure}

\begin{figure}[H]
    \centering
    \begin{subfigure}[H]{0.45\textwidth}
        \includegraphics[trim = 0 0 20 0, clip, width=\textwidth]{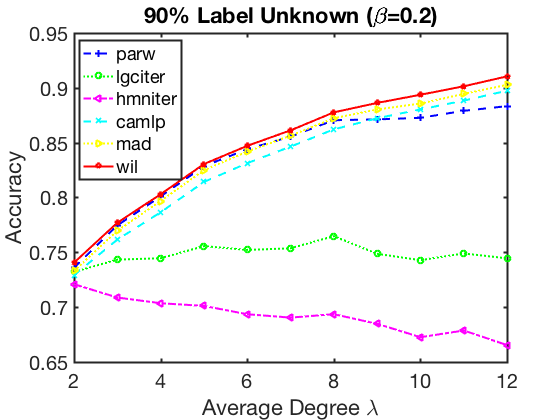}
        \caption{Accuracy with varying $\lambda$'s.}
        \label{fig:dcsbm_lambda_90}
    \end{subfigure}
    \begin{subfigure}[H]{0.45\textwidth}
        \includegraphics[trim = 0 0 20 0, clip, width=\textwidth]{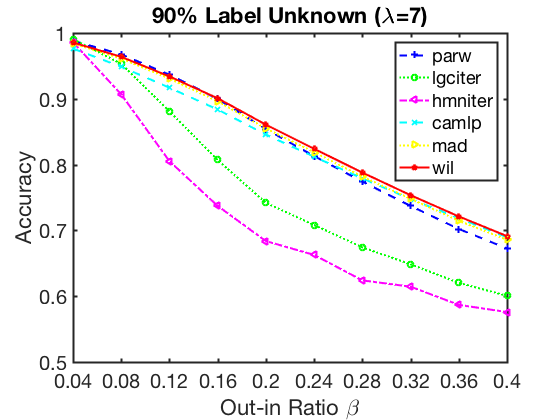}
        \caption{Accuracy with varying $\beta$'s.}
        \label{fig:dcsbm_oir_90}
    \end{subfigure}
\begin{subfigure}[H]{0.45\textwidth}
        \includegraphics[trim = 0 0 20 0, clip, width=\textwidth]{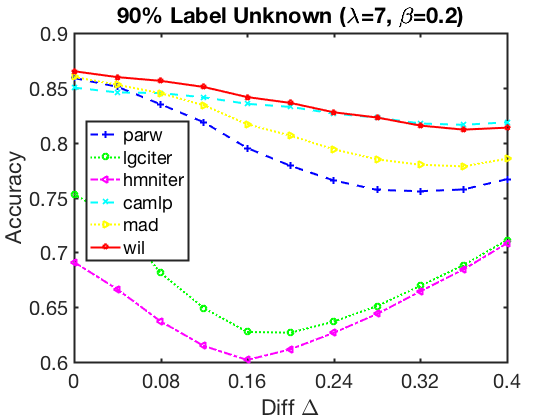}
        \caption{Accuracy with varying degrees of imbalance in community size.}
        \label{fig:dcsbm_diff_90}
    \end{subfigure}
    \begin{subfigure}[H]{0.45\textwidth}
        \includegraphics[trim = 0 0 20 0, clip, width=\textwidth]{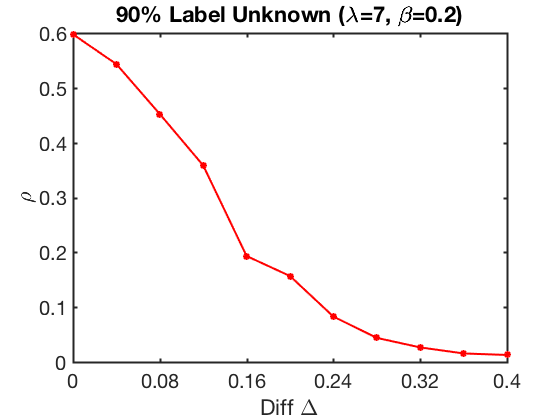}
        \caption{Average selection of $\rho$ with varying degrees of imbalance in community size.}
        \label{fig:dcsbm_rho_90}
    \end{subfigure}
    \caption{Comparison in the heterogeneous setting with $90\%$ of the labels unknown: Networks are simulated from the pDCBM with $n = 2000$, $K=2$, $\gamma=0.9$; in (a), $\beta = 0.2$ and $\pi = (1/2,1/2)$; in (b), $\lambda = 7$ and $\pi = (1/2,1/2)$; and in (c), $\lambda = 7, \beta = 0.2$, and $\pi = (1/2 -\Delta,1/2 + \Delta)$.}
\label{Fig:CompareDCSBM90}
\end{figure}
Similar to the homogeneous setting, the WIL method is more competitive in a more general heterogeneous setting, and it consistently ranks among the top methods. When the community sizes are inhomogeneous, the accuracy of WIL is among the best. At the same time, when the imbalance of community sizes increases, the average value of the best choice of $\rho$ decreases, which confirms our theoretical analysis again.

\section{Real-data Analysis}\label{sect_real}
In this section, we examine the performance of the WIL algorithm with real network data. Those methods considered in the simulations above are applied here as well. Three commonly studied datasets are used.

\begin{itemize}
\item[] {\bf Political blog network} \cite{r0} is regarded as a typical degree-corrected network \cite{r4}. The data were collected immediately after the 2004 US presidential election. Pairs of blogs are connected if there is a hyperlink between them. The giant component of it contains 1,222 blogs and 16,714 edges, where each blog is manually labeled as either liberal or conservative. The belief that blogs with similar political attitudes tend to be connected makes this network ideal for network community studies. Many researchers have tested their methods on this dataset to see how close their results of community detection are to the manual labels. 

\item[] {\bf Facebook friendship network} The Facebook network dataset consists of all the "friendship" links between users within each of 100 US universities, recorded in 2005. The dataset contains several node attributes such as the gender, dorm, graduation year, and academic major of each user. 
\begin{itemize}
\item[] {\bf Facebook Simmons college network (Simmons)} The Simmons College Facebook network is a friendship network that contains
1,518 nodes and 32,988 undirected relationship edges. We followed common pre-processing steps by considering the largest connected component of the students with graduation years (from 2006 to 2009; 4 communities), which leads to a subgraph of 1,137 nodes and 24,257 edges.
It was observed in \cite{facebook} that the class year had the highest assortativity values among all available
demographic characteristics, and so we treated the class year as the true community label.

\item[] {\bf Facebook Caltech network (Caltech)} Different from the Simmons College network in which communities are formed according to class years, communities in the Caltech friendship network are recorded by dorms \cite{facebook}. By using dorms as labels, we also treated students spread across eight different dorms as true community labels. Following the same pre-processing steps, we excluded the students whose residence information was missing and considered the largest connected component of the remaining network, which contained 590 nodes and 12,822 undirected edges. This dataset with more label kinds is more challenging than the Simmons College dataset. 

\end{itemize}
\end{itemize}

Table \ref{datadesc} shows a summary of the three datasets and Figure \ref{Fig:realhist} and \ref{Fig:realdeg} report the details of community size distribution and degree distribution. From the distributions, we can see that the community sizes are not ideally balanced in real networks. Moreover, the distributions of the degree are also quite different. These findings illustrate why we need to design and analyze the network propagation algorithm under general settings.

\begin{table}[H]
\centering
\begin{tabular}{|c|c|c|c|}
\hline
                 & n (number of nodes) & K (number of communities) & average degree \\ \hline
Political blogs    & 1222                & 2                         & 27.36          \\ \hline
Facebook (Simmons)    & 1137                 & 4                         & 42.67          \\ \hline
Facebook (Caltech) & 590                 & 8                        & 43.46           \\ \hline
\end{tabular}
\caption{Real-data description}
\label{datadesc}
\end{table}

\begin{figure}[H]
    \centering
    \begin{subfigure}[H]{0.32\textwidth}
        \includegraphics[trim = 0 0 20 0, clip, width=\textwidth]{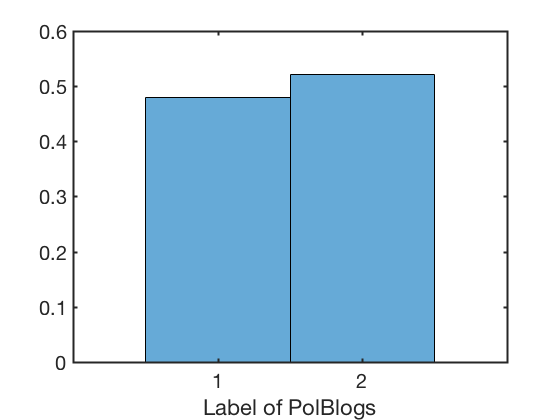}
        \caption{Hist. of Political Blogs.}
        \label{fig:realhistpolblogs}
    \end{subfigure}
    \begin{subfigure}[H]{0.32\textwidth}
        \includegraphics[trim = 0 0 20 0, clip, width=\textwidth]{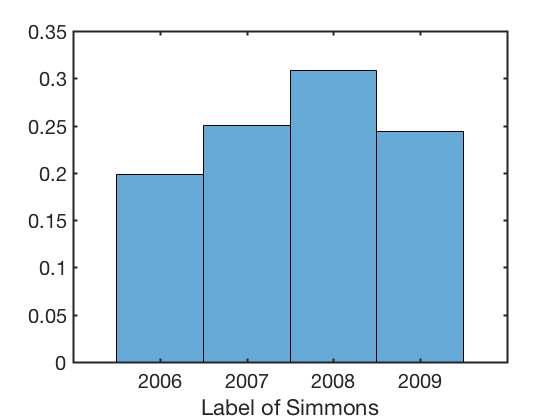}
        \caption{Hist. of Simmons.}
        \label{fig:realhistsimmons}
    \end{subfigure}
\begin{subfigure}[H]{0.32\textwidth}
        \includegraphics[trim = 0 0 20 0, clip, width=\textwidth]{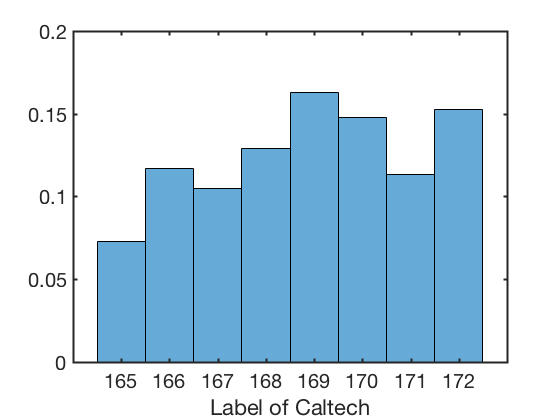}
        \caption{Hist. of Caltech.}
        \label{fig:realhistcaltech}
    \end{subfigure}
    \caption{Histogram of real-data}
\label{Fig:realhist}
\end{figure}

\begin{figure}[H]
    \centering
    \begin{subfigure}[H]{0.32\textwidth}
        \includegraphics[trim = 0 0 20 0, clip, width=\textwidth]{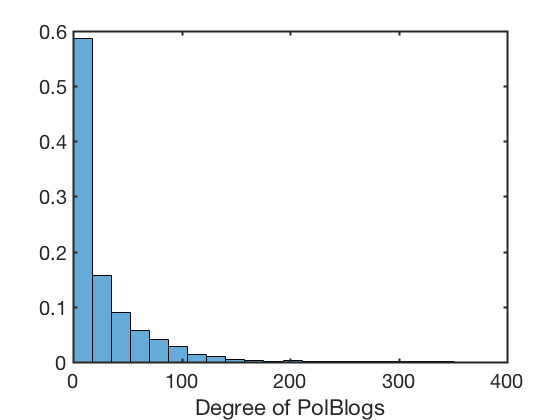}
        \caption{Degree of Political Blogs.}
        \label{fig:realdegpolblogs}
    \end{subfigure}
    \begin{subfigure}[H]{0.32\textwidth}
        \includegraphics[trim = 0 0 20 0, clip, width=\textwidth]{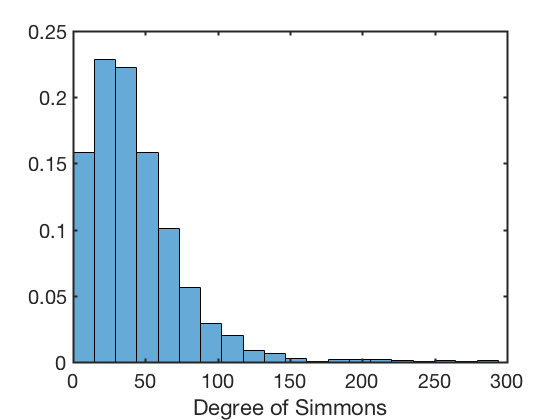}
        \caption{Degree of Simmons.}
        \label{fig:realdegsimmons}
    \end{subfigure}
\begin{subfigure}[H]{0.32\textwidth}
        \includegraphics[trim = 0 0 20 0, clip, width=\textwidth]{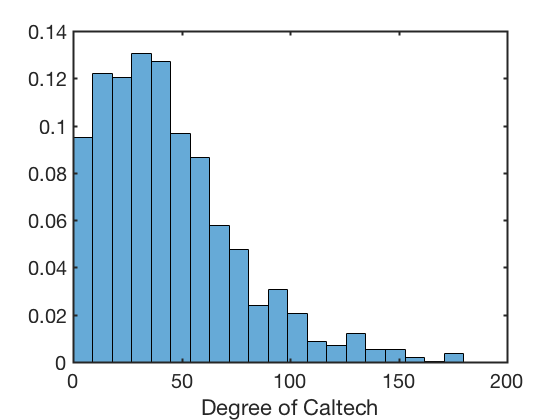}
        \caption{Degree of Caltech.}
        \label{fig:realdegcaltech}
    \end{subfigure}
    \caption{Degree distribution of real-data}
\label{Fig:realdeg}
\end{figure}

 Figures \ref{fig:realpolblogs}-\ref{fig:realcaltech} report the performance of the considered methods. The set of methods examined here is the same as that used in the simulation section.

\begin{figure}[H]
    \centering
    \begin{subfigure}[H]{0.48\textwidth}
        \includegraphics[trim = 0 0 20 0, clip, width=\textwidth]{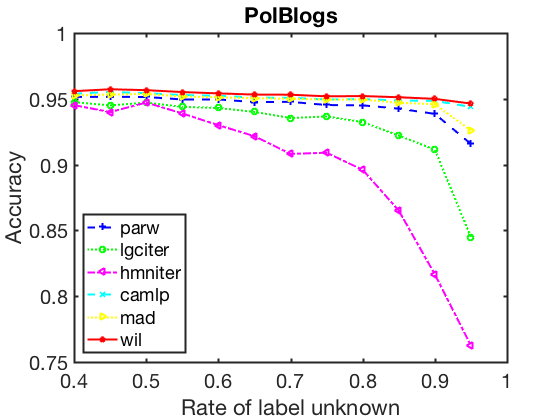}
        \caption{Test of Political Blogs.}
        \label{fig:realpolblogs}
    \end{subfigure}
\end{figure}
\begin{figure}[H]\ContinuedFloat
    \begin{subfigure}[H]{0.48\textwidth}
        \includegraphics[trim = 0 0 20 0, clip, width=\textwidth]{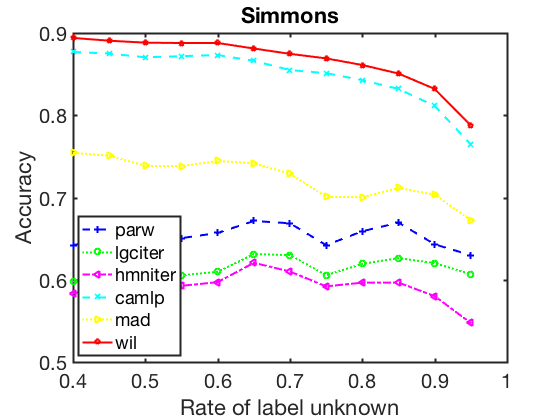}
        \caption{Test of Facebook Simmons.}
        \label{fig:realsimmons}
    \end{subfigure}
\begin{subfigure}[H]{0.48\textwidth}
        \includegraphics[trim = 0 0 20 0, clip, width=\textwidth]{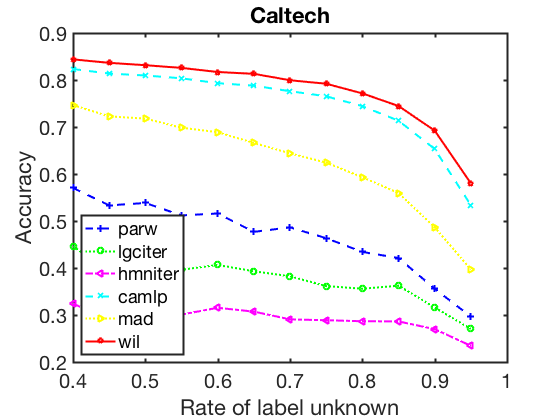}
        \caption{Test of Facebook Caltech.}
        \label{fig:realcaltech}
    \end{subfigure}
    \caption{Accuracy with varying unknown rates.}
\label{Fig:realtest}
\end{figure}

\begin{figure}[H]
    \centering
        \includegraphics[trim = 0 0 20 0, clip, width=0.5\textwidth]{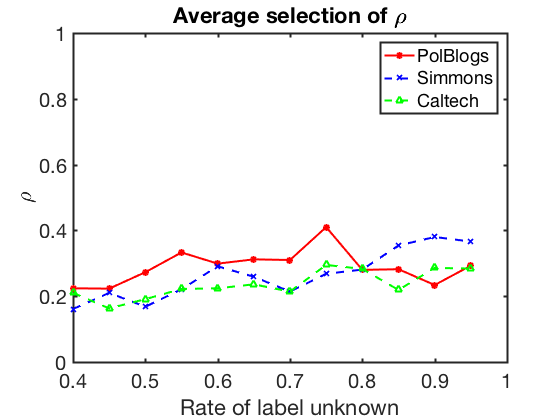}
        \caption{Average selection of $\rho$ with varying unknown rates.}
\label{Fig:realrho}
\end{figure}

The WIL gives very competitive results in different settings. Especially, when the percentage of labeled data decreases, the accuracy of WIL performance ranks among the best. What is also worth noticing here is the choice of $\rho$ in real-data analysis. In Figure \ref{Fig:realrho}, we recorded the average selection of $\rho$. From the performance, we can see that the choices of $\rho$ are small. This gives one of reason why the traditional random walk fails in these test cases.

\section{Related Works}\label{sect_related_works}
Graph-based semi-supervised learning (GSSL) is a well-studied topic in computer science and engineering. \cite{r605} first used min-cuts in a graph to perform clustering and \cite{r606} introduced normalized min-cuts to deal with the issue of unbalanced networks. Other spectral-based methods were later proposed, including  \cite{r607, r608}. A Gaussian kernel was used to construct the graph in \cite{r609}. A random walk based method called Adsorption was proposed in \cite{r610} and was modified in \cite{r611} into something called MAD. TACO, which was proposed in \cite{r612}, introduced an additional quantity of confidence in labeling. We recommend \cite{r613} for a good review of GSSL. While many methods on GSSL were developed in the last decade, few considered the consistency theoretically. \cite{r614} is the only paper we found that discussed the consistency of the basic GSSL method theoretically.

Although some methods have proven to be efficient at GSSL, they might not be able to perform network-based semi-supervised learning (NSSL) well. First, in NSSL, we obtain the network directly, so the network structure is unclear, while the network for GSSL is always constructed by a similarity measure. Second, there is randomness in the link generation of networks, which implies more noise and less information, thus increasing the difficulty of the NSSL problem. Last, networks for GSSL are almost fully connected, while those for NSSL might be very sparse which will cause heterogeneity. 

\section{Conclusion and discussioin}\label{sect_CONC}

We proposed a scalable method called WIL for semi-supervised learning in networks and a new generative model called the pDCBM for the problem . The underlying idea of WIL is to enhance the information represented by an adjacency matrix by considering the combination of two random walks with different normalizations in the network. This method is designed specifically for unbalanced networks, although it works well for balanced networks as well. It also works superbly when the network is heterogeneous. Both theoretical study and empirical study show the advantage of the WIL algorithm for heterogeneous networks. 

It would be interesting to study the theoretical properties of the WIL algorithm under the sparse network setting in the future. In this paper, for the dense scenario ($E(degree)=\Omega(\log n)$), we have proved the consistency of WIL. However, simulations suggest that denseness is not required in practice. Therefore, it would be interesting to explore in a theoretical study if we could extend the result to $E(degree)=\Omega(1)$. Additionally, the pDCBM can also be extended, for example,  by letting $K$ goes to infinity with $n$, and making matrix $B$ more general.  

More over, WIL might work for other problems such as the regression problem in networks. We can also extend the network generating model to weighted edges instead of 0/1 as well as directed networks. We leave all of these open problems to feature research.

\section{Acknowledgments}
The authors would like to thank Prof. Zhigang Bao for helpful advice.

\appendix
\section{Technical Proof}
We prove the main results here by introducing useful notation first. For any matrix (vector) $M$, $[M]_i$ denotes the $i'th$ row of $M.$ Let $P=\Theta ZBZ^T\Theta^T$, and $d_i=\sum_{j}p_{i,j}$ which is the expected degree of node $i$. We set $D$ to be the diagonal matrix with $d_{i,i}=d_i.$ 

We give the proof of main results based on the pDCBM. First, let us make the network homogeneous, which means $\theta_i=1 \ \forall i \in {1,2,\dots,n},$ and $K=2.$ When $K=2,$ we transform $Y$ into a vector, $[Y]_i=1$ if $i \in L \and\ c(i)=1$, $[Y]_i=-1 $ if $i \in L \and\ c(i)=2,$ and $[Y]_i=0$ if $c(i)\in U.$ 

Let us discuss the behavior of degrees first. The following result is Lemma 8 in \cite{r5}:

\begin{lemma}\label{var_of_d}
Let $\delta_{i,c}=max\{d_i, c\log n\}$. With probability $1-2/n^{c_1-1},$ one has 
$$\|\hat{d}_i-d_i\|\leq c_2 \sqrt{\delta_{i,c} \log n} \ \  for \  each \  i=1,\dots, n,$$
where $c_1=0.5c_2^2/(1+c_2/\sqrt{c}).$
\end{lemma}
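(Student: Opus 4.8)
The plan is to recognize this as a standard concentration statement and prove it by combining a Bernstein-type tail bound for sums of independent Bernoulli variables with a union bound over the $n$ nodes. For a fixed node $i$, the observed degree $\hat{d}_i=\sum_{j}a_{i,j}$ is a sum of independent Bernoulli random variables $a_{i,j}\sim \mathrm{Bern}(p_{i,j})$ with mean $d_i=\sum_j p_{i,j}$ and variance $\sigma_i^2=\sum_j p_{i,j}(1-p_{i,j})\le d_i$. First I would apply Bernstein's inequality in the slightly loosened form
$$\mathbb{P}(|\hat{d}_i-d_i|\ge t)\le 2\exp\!\Big(-\frac{t^2/2}{\sigma_i^2+t}\Big)\le 2\exp\!\Big(-\frac{t^2/2}{d_i+t}\Big),$$
which is valid because each summand is bounded by $1$ (so the usual factor $Mt/3$ is at most $t$) and because $\sigma_i^2\le d_i$.

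Next I would substitute the target radius $t=c_2\sqrt{\delta_{i,c}\log n}$ and estimate the exponent using two elementary consequences of the definition $\delta_{i,c}=\max\{d_i,c\log n\}$: namely $d_i\le \delta_{i,c}$, and, since $\delta_{i,c}\ge c\log n$ forces $\log n\le \delta_{i,c}/c$, the bound $\sqrt{\delta_{i,c}\log n}\le \delta_{i,c}/\sqrt{c}$. Together these give $d_i+t\le \delta_{i,c}(1+c_2/\sqrt{c})$, so that
$$\frac{t^2/2}{d_i+t}\ge \frac{c_2^2\,\delta_{i,c}\log n/2}{\delta_{i,c}(1+c_2/\sqrt{c})}=\frac{0.5\,c_2^2}{1+c_2/\sqrt{c}}\,\log n=c_1\log n,$$
and hence the per-node failure probability is at most $2e^{-c_1\log n}=2n^{-c_1}$.

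Finally I would take a union bound over the $n$ choices of $i$, which yields
$$\mathbb{P}\Big(\exists\, i:\ |\hat{d}_i-d_i|\ge c_2\sqrt{\delta_{i,c}\log n}\Big)\le n\cdot 2n^{-c_1}=2n^{-(c_1-1)},$$
so with probability at least $1-2/n^{c_1-1}$ the stated bound holds simultaneously for every $i$.

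The argument is essentially routine, so there is no single deep obstacle; the only point requiring care is the interplay of the two regimes encoded in the $\max$ defining $\delta_{i,c}$. When $d_i$ dominates (a high-degree node) the deviation is the standard Poisson-type fluctuation of order $\sqrt{d_i\log n}$, whereas when $d_i$ is small the floor $c\log n$ supplies the slack needed for the tail to remain sub-polynomial; the single inequality $\sqrt{\delta_{i,c}\log n}\le \delta_{i,c}/\sqrt{c}$ is precisely what lets both regimes be handled uniformly and is also what pins down the exact constant $c_1=0.5c_2^2/(1+c_2/\sqrt{c})$.
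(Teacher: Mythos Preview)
Your argument is correct. Bernstein's inequality in the loosened form you state (replacing the $Mt/3$ term by $t$ since each centered summand is bounded by $1$), together with the two elementary bounds $d_i\le\delta_{i,c}$ and $\sqrt{\delta_{i,c}\log n}\le\delta_{i,c}/\sqrt c$, yields exactly the exponent $c_1\log n$, and the union bound then gives the stated probability $1-2/n^{c_1-1}$.

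As for comparison with the paper: the paper does not actually supply a proof of this lemma. It is quoted verbatim as Lemma~8 of Joseph and Yu (2016), \emph{Impact of regularization on spectral clustering}, and is invoked without argument. Your Bernstein-plus-union-bound derivation is the standard route and is essentially the proof that appears in that reference, so in effect you have reconstructed the cited argument rather than offered an alternative one.
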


From the above lemma, we arrive at the following result immediately:

\begin{lemma}\label{lemma_degree}
When $min\{d_i\}=\Omega(\log n),$ for any $k\in \mathbb{N},$ there exists a constant $c\geq 0.$ With probability $1-n^{-c},$
$$[(\hat{D}^{-1}A)^kY]_i=[(D^{-1}A)^kY]_i+o(1),$$
$$[(A \hat{D}^{-1})^kY]_i=[(AD^{-1})^kY]_i+o(1).$$
\end{lemma}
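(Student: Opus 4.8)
The plan is to treat $k$ as fixed and to deduce the two displayed identities by propagating the degree concentration of Lemma \ref{var_of_d} through the $k$ matrix powers via a telescoping argument. First I would record the consequence of Lemma \ref{var_of_d} that is actually used: on its event, which has probability $1-2/n^{c_1-1}$, the hypothesis $\min_i d_i=\Omega(\log n)$ forces $\delta_{i,c}=d_i$ for a suitable choice of $c$, so that $|\hat d_i-d_i|\le c_2\sqrt{d_i\log n}$ for every $i$. This gives the relative fluctuation $|\hat d_i-d_i|/d_i\le c_2\sqrt{\log n/d_i}$ and, writing $E=\hat D^{-1}-D^{-1}$, the diagonal bound $|E_{ii}|=|\hat d_i-d_i|/(\hat d_i d_i)\le c_2\,d_i^{-3/2}\sqrt{\log n}\,(1+o(1))$.

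Second, with $M=D^{-1}A$ and $\hat M=\hat D^{-1}A$, I would use the telescoping identity
\begin{equation*}
\hat M^k-M^k=\sum_{j=0}^{k-1}\hat M^{\,j}\,(\hat M-M)\,M^{\,k-1-j},\qquad \hat M-M=EA.
\end{equation*}
The key structural observation is that $\hat M$ is \emph{exactly} row-stochastic ($\hat D^{-1}A$ has row sums $1$) and, in the homogeneous regime assumed here ($\theta_i=1$, so all expected degrees $d_i$ are of the same order $\Theta(d)$), $M$ is approximately row-stochastic with $\|M\|_\infty=\max_i \hat d_i/d_i=O(1)$. Hence the intermediate vectors $v^{(m)}:=M^{m}Y$ satisfy $\|v^{(m)}\|_\infty=O(1)$ for each fixed $m$, and multiplication by $\hat M^{\,j}$ does not amplify the sup-norm. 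I would then bound the $i$-th entry of each telescoping term: with $v=M^{k-1-j}Y$, the vector $EAv$ has $\ell$-th entry $E_{\ell\ell}[Av]_\ell$, and since $[Av]_\ell=\hat d_\ell[\hat M v]_\ell=O(\hat d_\ell)$ while $|E_{\ell\ell}|=O(d_\ell^{-3/2}\sqrt{\log n})$, we get $\|EAv\|_\infty=O(\sqrt{\log n/\min_\ell d_\ell})$. Applying $\hat M^{\,j}$ preserves this bound, so each of the (finitely many) terms is $O(\sqrt{\log n/\min_\ell d_\ell})=o(1)$ under the degree assumption, and summing over $j$ yields the first identity entrywise.

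For the second identity I would argue symmetrically with $N=AD^{-1}$ and $\hat N=A\hat D^{-1}$, which are column-stochastic rather than row-stochastic. Here I would either transpose, using $A=A^{T}$ so that $(A\hat D^{-1})^{k}=((\hat D^{-1}A)^{k})^{T}$ and reduce to the first case in the dual norm, or repeat the telescoping with $\hat N-N=AE$, again invoking that in the homogeneous regime the comparable expected degrees make $\|N\|_\infty=O(1)$ so the intermediate vectors stay bounded. All estimates hold on the single high-probability event of Lemma \ref{var_of_d}, which delivers the claimed probability $1-n^{-c}$.

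The hard part will be the stability of the iteration: controlling a fixed \emph{entry} (not merely an operator norm) of a product of $k$ data-dependent matrices while ensuring that repeated multiplication does not blow up the accumulated degree-estimation error. This rests entirely on the approximate stochasticity of $M$ and $\hat M$, which in turn uses the homogeneity of the expected degrees in the base case $\theta_i\equiv 1$; the column-normalized case is the more delicate one, since $A\hat D^{-1}$ is not row-contractive and one must pass to the transpose or to a dual norm to keep the intermediate quantities bounded.
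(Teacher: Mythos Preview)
Your approach is correct and close in spirit to the paper's, but organized differently. The paper does not telescope at the matrix level; instead it expands $[(D^{-1}A)^k Y]_i - [(\hat D^{-1}A)^k Y]_i$ directly as a signed sum over paths $(i,j_1,\dots,j_k)$, factors out the common product $a_{i,j_1}\cdots a_{j_{k-1},j_k}$, and bounds the difference of the degree-reciprocal products by $o\bigl(1/(\hat d_i\hat d_{j_1}\cdots\hat d_{j_{k-1}})\bigr)$ in a single step (since $k$ is fixed and each ratio $\hat d_\ell/d_\ell=1+o(1)$ uniformly on the event of Lemma~\ref{var_of_d}). It then collapses the resulting absolute sum by repeatedly applying $\sum_{j_m} a_{j_{m-1},j_m}=\hat d_{j_{m-1}}$, which is precisely your ``$\hat M$ is row-stochastic'' observation written in coordinates. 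Your telescoping identity $\hat M^k-M^k=\sum_{j}\hat M^{\,j}(EA)M^{k-1-j}$ together with the sup-norm bookkeeping accomplishes the same thing with more structure; in particular, making $\|M^mY\|_\infty\le\|M\|_\infty^m=\bigl(\max_i\hat d_i/d_i\bigr)^m=1+o(1)$ explicit is a cleaner way to articulate the stability of the iteration that the paper leaves implicit. For the column-normalized identity the paper simply says the proof is similar; your second option---repeating the telescoping with $N=AD^{-1}$, $\hat N=A\hat D^{-1}$ and using that comparable expected degrees give $\|N\|_\infty=O(1)$---is exactly what is needed and matches what the paper's path-sum collapse would require as well. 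The transpose-and-dual-norm route you mention first is not as immediate as you suggest, since column sums of the row-stochastic $\hat M^{\,j}$ are not controlled, so you should rely on the direct argument under degree comparability.
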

\begin{proof}
From Lemma \ref{var_of_d}, with probability $1-2/n^{c_1-1},$ we have 
$$\max_i|\frac{\hat{d}_i}{d_i}-1|\leq \max_i c_2\sqrt{\delta_{i,c} \log n} /d_i\leq max_i c_2\sqrt{\frac{\log n}{d_i}}\rightarrow 0,$$ 
which means $\hat{d}_i \rightarrow d_i \ in \ probability, \  for \  each \ i=1,\dots n.$ Without loss of generality, we set $c(i)=1.$ We have:
\begin{align*}
& |[(D^{-1}A)^kY]_i-[(\hat{D}^{-1}A)^kY]_i |\\
& = |\sum_{j_1,\dots j_k \in G}(-1)^{c(j_k)-1}a_{i,j_1}a_{j_1,j_2}\dots a_{j_{k-1},j_k}(\frac{1}{d_id_{j_1}\dots d_{j_{k-1}}}-\frac{1}{\hat{d}_i\hat{d}_{j_1}\dots \hat{d}_{j_{k-1}}})|\\
& \leq \sum_{j_1,\dots j_k \in G}a_{i,j_1}a_{j_1,j_2}\dots a_{j_{k-1},j_k}o(\frac{1}{\hat{d}_i\hat{d}_{j_1}\dots \hat{d}_{j_{k-1}}})\\
& = \sum_{j_1,\dots j_{k-1} \in G}a_{i,j_1}a_{j_1,j_2}\dots a_{j_{k-2},j_{k-1}}o(\frac{1}{\hat{d}_i\hat{d}_{j_1}\dots \hat{d}_{j_{k-1}}})\sum_{j_k\in G}a_{j_{k-1},j_k}\\
& = \sum_{j_1,\dots j_{k-1} \in G}a_{i,j_1}a_{j_1,j_2}\dots a_{j_{k-2},j_{k-1}}o(\frac{1}{\hat{d}_i\hat{d}_{j_1}\dots \hat{d}_{j_{k-2}}})\\
&\dots\\
& = \sum_{j_1\in G}a_{i,j_1}o(\frac{1}{\hat{d}_i})\\
& = o(1)
\end{align*}
\end{proof}

Now, we mainly focus on $[(D^{-1}A)^kY]_i$ and $[(AD^{-1})^kY]_i.$ We first look at their expectations.

\begin{lemma}\label{lemma_expect}
When $min\{d_i\}=\Omega(\log n),$ for any $k\in \mathbb{N}$,
$$\mathbb{E}([(D^{-1}A)^kY]_i)=[(D^{-1}P)^kY]_i+O(\frac{1}{d}),$$
$$\mathbb{E}([(AD^{-1})^kY]_i)=[(PD^{-1})^kY]_i+O(\frac{1}{d}).$$
\end{lemma}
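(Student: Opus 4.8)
The plan is to expand each entry as a sum over length-$k$ walks and to compare the random weights $A$ against their means $p$ one edge at a time. Writing out the matrix product,
$$[(D^{-1}A)^kY]_i=\sum_{j_1,\dots,j_k}\frac{A_{i,j_1}A_{j_1,j_2}\cdots A_{j_{k-1},j_k}}{d_id_{j_1}\cdots d_{j_{k-1}}}\,Y_{j_k},$$
so taking expectations reduces everything to evaluating $\mathbb{E}[A_{i,j_1}A_{j_1,j_2}\cdots A_{j_{k-1},j_k}]$. The target $[(D^{-1}P)^kY]_i$ is the identical sum with each $A_e$ replaced by its mean $p_e=\mathbb{E}[A_e]$. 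Since the $A_e$ are independent Bernoulli variables with $\mathbb{E}[A_e^m]=p_e$ for every $m\ge 1$, the two expressions agree termwise on every walk whose $k$ edges are pairwise distinct as undirected pairs; the entire discrepancy comes from \emph{degenerate} walks, i.e.\ those that traverse some edge at least twice.

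First I would record this dichotomy cleanly: in absolute value the difference of the two sides equals the sum, restricted to degenerate walks, of $\frac{1}{d_id_{j_1}\cdots d_{j_{k-1}}}\bigl|\mathbb{E}[\prod_e A_e]-\prod_e p_e\bigr|\,|Y_{j_k}|$. Using $|Y_{j_k}|\le 1$ together with the homogeneous assumption $\theta_i=1$ and $\min_i d_i=\Omega(\log n)$, every expected degree satisfies $d_i\asymp d$, so each of the $k$ normalization factors is $\asymp 1/d$, while both $\mathbb{E}[\prod_e A_e]$ and $\prod_e p_e$ are bounded by a product of $p_e\asymp p$ taken over the distinct edges of the walk (a repeated edge contributes $p_e$, not $p_e^2$, to the expectation).

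The core step is the counting bound for degenerate walks. A length-$k$ walk from the fixed vertex $i$ that repeats an edge spans at most $k-1$ free intermediate vertices, so there are $O(n^{k-1})$ walks of any given combinatorial shape; each carries $k$ degree factors ($\asymp d^{-k}$) and effectively $k-1$ probability factors ($\asymp p^{k-1}$). Hence a single shape contributes $O\bigl(n^{k-1}p^{k-1}d^{-k}\bigr)=O\bigl(1/(np)\bigr)=O(1/d)$, and walks with $r\ge 2$ coincidences lose $r$ free vertices while saving only $r$ probability factors, contributing $O(1/d^r)$. Because $k$ is fixed, the number of shapes is bounded independently of $n$, so summing yields $O(1/d)$. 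The statement for $(AD^{-1})^k$ follows by the same argument applied to the transpose, i.e.\ by normalizing on the source rather than the target vertex.

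I expect the main obstacle to be the bookkeeping in the counting step: enumerating the degeneracy patterns of a length-$k$ walk and verifying that the ``lost free vertex / saved probability factor'' trade-off is exactly one-for-one, so that every degenerate shape is genuinely $O(1/d)$ or smaller and no borderline configuration secretly produces an $O(1)$ term. Some care is also needed at the ends of the product (the factor $d_{j_k}$ is absent and self-loops are excluded since $A_{ii}=0$), but these features only remove terms and therefore help the bound rather than hurt it.
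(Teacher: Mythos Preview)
Your proposal is correct and follows essentially the same route as the paper: both expand $[(D^{-1}A)^kY]_i$ as a sum over length-$k$ walks, observe that independence makes the expectation factor as $\prod p_e$ on walks with pairwise distinct edges, and then bound the degenerate contribution by the walk-counting argument that a repeated edge costs a free vertex, yielding $\binom{n}{k-m}p^{k-m}/d^k=O(1/d^m)$. The only cosmetic difference is that you phrase the error as the restricted sum of $|\mathbb{E}[\prod A_e]-\prod p_e|$ whereas the paper bounds the two degenerate sums separately, but the combinatorics and the resulting $O(1/d)$ are identical.
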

\begin{proof}
Without loss of generality, we assume $c(i)=1.$
\begin{equation}
\mathbb{E}([(D^{-1}A)^kY]_i)=\sum_{j_1,\dots j_k \in G}(-1)^{c(j_k)-1}\frac{\mathbb{E}(a_{i,j_1}a_{j_1,j_2}\dots a_{j_{k-1},j_k})}{d_id_{j_1}\dots d_{j_{k-1}}}
\end{equation}
If $a_{i,j_1}, a_{j_1,j_2},\dots, a_{j_{k-1},j_k}$ are all independent, then\\
$\mathbb{E}(a_{i,j_1} a_{j_1,j_2}\dots a_{j_{k-1},j_k})=p_{i,j_1}p_{j_1,j_2}\dots p_{j_{k-1},j_k}.$ If there exist only $k-m$ independent random variables in  $a_{i,j_1},a_{j_1,j_2},\dots, a_{j_{k-1},j_k},$ then $\{j_1,j_2,\dots j_k\}$ has $k-m$ different values at most, which means $|\{j_1,j_2,\dots j_k\}|\leq k-m.$ Their sum is:\\
 $\sum_{|\{j_1,j_2,\dots j_k\}|\leq k-m} \frac{\mathbb{E}(a_{i,j_1}a_{j_1,j_2}\dots a_{j_{k-1},j_k})}{d_id_{j_1}\dots d_{j_{k-1}}}\leq    \left(
    \begin{array}{c}
      k \\
      m
    \end{array}
  \right) 
  (k-m)^m
   \left(
    \begin{array}{c}
      n \\
      k-m
    \end{array}
  \right)p^{k-m}/d^k=O(\frac{1}{d^m}).$
So \begin{equation}
\mathbb{E}([(D^{-1}A)^kY]_i)=\sum_{j_1,\dots j_k \in G}(-1)^{c(j_k)-1}\frac{p_{i,j_1}p_{j_1,j_2}\dots p_{j_{k-1},j_k}}{d_id_{j_1}\dots d_{j_{k-1}}}+O(\frac{1}{d})=[(D^{-1}P)^kY]_i+O(\frac{1}{d}).
\end{equation}
Similarly, we can prove $\mathbb{E}([(AD^{-1})^kY]_i)=[(PD^{-1})^kY]_i+O(\frac{1}{d}).$
\end{proof}

As for variance, we have the following lemma:
\begin{lemma} \label{lemma_var}
When $min\{d_i\}=\Omega(\log n),$ for any $k\in \mathbb{N}$,
$$Var([(\hat{D}^{-1}A)^kY]_i)=O(\frac{1}{d}),$$
$$Var([(A\hat{D}^{-1})^kY]_i)=O(\frac{1}{d}).$$
\end{lemma}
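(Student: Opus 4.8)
The plan is to reduce the random degree denominators to their deterministic expectations and then run a second-moment (walk-expansion) argument, isolating the one edge-overlap pattern that actually produces the variance. Since $\hat D^{-1}A$ is row-stochastic, $[(\hat D^{-1}A)^kY]_i$ lies in $[-1,1]$, and Lemma~\ref{lemma_degree} gives $[(\hat D^{-1}A)^kY]_i=[(D^{-1}A)^kY]_i+o(1)$ with probability $1-n^{-c}$. I would therefore first pass to $X:=[(D^{-1}A)^kY]_i$, whose only randomness lives in the independent Bernoulli entries $a_e$ of $A$, the denominators $d_j$ now being deterministic and of order $\Theta(d)$.

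Next I would expand over walks exactly as in the proof of Lemma~\ref{lemma_expect}. Writing $X=\sum_W c_W\prod_{e\in W}a_e$, where $W=(i=j_0,j_1,\dots,j_k)$ ranges over length-$k$ walks from $i$ and $c_W=Y_{j_k}/(d_{j_0}\cdots d_{j_{k-1}})$, I get
$$\mathrm{Var}(X)=\sum_{W,W'}c_Wc_{W'}\,\mathrm{Cov}\Big(\prod_{e\in W}a_e,\ \prod_{e'\in W'}a_{e'}\Big),$$
and, since distinct $a_e$ are independent while $a_e^2=a_e$, the covariance vanishes unless $W$ and $W'$ share at least one edge. Each shared edge replaces a factor $p_e$ by $1$, i.e. contributes a multiplier $(1-p_e)/p_e=\Theta(n/d)$, while forcing the two walks to agree on that edge removes free vertex summations.

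I would then classify the overlaps and track orders against the benchmark $\sum_W c_W\prod_e p_e=[(D^{-1}P)^kY]_i=O(1)$. If $W,W'$ agree on a common prefix of $m$ edges (so $j_1=j'_1,\dots,j_m=j'_m$) and then diverge, the $m$ coincidences each cost one free vertex but gain one factor $\Theta(n/d)$, and a direct count gives a contribution of order $1/d^m$; the diagonal $W=W'$ is the case $m=k$. The dominant term is $m=1$, where the walks share only the first edge $\{i,j_1\}$, and it is $O(1/d)$ --- matching the base case $k=1$, for which $\mathrm{Var}(X)=d_i^{-2}\sum_v Y_v^2\,p_{iv}(1-p_{iv})\le d_i^{-1}$. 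Any overlap that starts in the interior (rather than at the common start $i$) pins down one extra vertex and is suppressed by a further $1/n$, contributing only $O(1/(nd))$. Summing, $\mathrm{Var}(X)=O(1/d)+O(1/d^2)+\cdots=O(1/d)$. The source-normalized statement is identical: $[(A\hat D^{-1})^kY]_i$ expands into the same walks with denominators $d_{j_1}\cdots d_{j_k}$ in place of $d_{j_0}\cdots d_{j_{k-1}}$, still $\Theta(d)$, so the same count applies.

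The main obstacle I expect is the combinatorial classification in the third step: enumerating every way two length-$k$ walks can overlap --- including self-intersecting walks, where $\prod_e a_e$ carries repeated factors, and overlaps consisting of several disconnected pieces --- and certifying that every pattern except ``share exactly the first edge'' is $o(1/d)$. A secondary but genuine subtlety is keeping the transfer from $\hat D$ to $D$ sharp: the error in Lemma~\ref{lemma_degree} is of size $O(\sqrt{\log n/d})$, so squaring it crudely would only yield $O(\log n/d)$; obtaining the stated $O(1/d)$ requires either computing the variance of the ratio $[(\hat D^{-1}A)^kY]_i$ directly or linearizing $\hat d_j^{-1}=d_j^{-1}(1+O(\sqrt{\log n/d}))$ and checking, as in the $k=1$ delta-method computation, that the denominator fluctuations enter only with a favorable $d^{-2}$ prefactor and hence do not inflate the order.
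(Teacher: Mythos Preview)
Your approach is genuinely different from the paper's, and the difference is exactly at the point you flag as an obstacle. You want to replace $\hat D$ by $D$ up front (via Lemma~\ref{lemma_degree}) and then run a clean edge--overlap count on $X=[(D^{-1}A)^kY]_i$, whose only randomness sits in the independent Bernoulli edges. Your walk--overlap classification for this $X$ is correct: prefix overlaps of length $m$ contribute $O(1/d^m)$, interior overlaps cost an extra pinned vertex, and the $m=1$ term dominates at $O(1/d)$. The paper, by contrast, never passes to $D$. It keeps the random denominators $\hat d_j$, uses the combinatorial reduction only to isolate the pairs of walks with all $2k$ intermediate vertices distinct (dumping the rest into an $O(1/d)$ remainder, essentially your overlap count), and then --- this is the part your plan does not contain --- conditions on the edge events and shows that the residual covariance comes from the correlation of the degree--reciprocal products $1/(\hat d_i\hat d_{j_1}\cdots\hat d_{j_{k-1}})$. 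The key computation is $\mathrm{Var}(1/\hat d_i)=O(1/d^3)$ via a Taylor expansion, which propagates to $\mathrm{Cov}(1/X_j,1/X_l)=O(1/d^{2k+1})$ and hence an $O(1/d)$ contribution after summing.

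What each buys: your route makes the combinatorics transparent (only edges are random, and the overlap bookkeeping is exactly the standard trace--method picture), but you then have to close the $\hat D\to D$ gap, and as you correctly note, squaring the high--probability bound from Lemma~\ref{lemma_degree} only gives $O(\log n/d)$. The paper's route avoids that transfer entirely by treating the degree fluctuations as the object of interest once the edge pattern is fixed; the price is that the conditioning argument is less combinatorially explicit. Your proposed fix --- a delta--method linearization $\hat d_j^{-1}=d_j^{-1}-d_j^{-2}(\hat d_j-d_j)+\cdots$ --- would work and is in fact precisely the $\mathrm{Var}(1/\hat d_i)=O(1/d^3)$ step the paper carries out, so the two proofs converge at that point; but as written your proposal leaves this step as a stated obstacle rather than an executed argument.
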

\begin{proof}
Same as the proof of Lemma \ref{lemma_expect}, we can write
\begin{align*}
& Var([(\hat{D}^{-1}A)^kY]_i)\\
& = \mathbb{E}(\sum_{j_1,\dots j_k \in G}(-1)^{c(j_k)-1}\frac{a_{i,j_1}a_{j_1,j_2}\dots a_{j_{k-1},j_k}}{\hat{d}_i\hat{d}_{j_1}\dots \hat{d}_{j_{k-1}}})^2-[\mathbb{E}(\sum_{j_1,\dots j_k \in G}(-1)^{c(j_k)-1}\frac{a_{i,j_1}a_{j_1,j_2}\dots a_{j_{k-1},j_k}}{\hat{d}_i\hat{d}_{j_1}\dots \hat{d}_{j_{k-1}}})]^2\\
& =\sum_{|j_1,\dots j_k , l_1,\dots l_k|=2k} \mathbb{E}(\frac{a_{i,j_1}a_{j_1,j_2}\dots a_{j_{k-1},j_k}a_{i,l_1}a_{j_1,l_2}\dots a_{l_{k-1},l_k}}{\hat{d}_i\hat{d}_{j_1}\dots \hat{d}_{j_{k-1}}\hat{d}_i\hat{d}_{l_1}\dots \hat{d}_{l_{k-1}}})-\\
& \sum_{|j_1,\dots j_k , l_1,\dots l_k|=2k}\mathbb{E}(\frac{a_{i,j_1}a_{j_1,j_2}\dots a_{j_{k-1},j_k}}{\hat{d}_i\hat{d}_{j_1}\dots \hat{d}_{j_{k-1}}})\mathbb{E}(\frac{a_{i,l_1}a_{j_1,l_2}\dots a_{l_{k-1},l_k}}{\hat{d}_i\hat{d}_{l_1}\dots \hat{d}_{l_{k-1}}})
+O(\frac{1}{d})\\
& =\sum_{|j_1,\dots j_k , l_1,\dots l_k|=2k} p_{i,j_1}p_{j_1,j_2}\dots p_{j_{k-1},j_k}p_{i,l_1}p_{j_1,l_2}\dots p_{l_{k-1},l_k}[\\
&\mathbb{E}(\frac{1}{\hat{d}_i\hat{d}_{j_1}\dots \hat{d}_{j_{k-1}}\hat{d}_i\hat{d}_{l_1}\dots \hat{d}_{l_{k-1}}}|a_{i,j_1}=a_{j_1,j_2}=\dots= a_{j_{k-1},j_k}=a_{i,l_1}=a_{j_1,l_2}=\dots =a_{l_{k-1},l_k}=1)-\\
&\mathbb{E}(\frac{1}{\hat{d}_i\hat{d}_{j_1}\dots \hat{d}_{j_{k-1}}}|a_{i,j_1}=a_{j_1,j_2}=\dots= a_{j_{k-1},j_k}=1)\mathbb{E}(\frac{1}{\hat{d}_i\hat{d}_{l_1}\dots \hat{d}_{l_{k-1}}}|a_{i,l_1}=a_{j_1,l_2}=\dots =a_{l_{k-1},l_k}=1)]\\
&+O(\frac{1}{d})
\end{align*}
We rewrite 
$$\frac{1}{X_j}=\frac{1}{\hat{d}_i\hat{d}_{j_1}\dots \hat{d}_{j_{k-1}}}|a_{i,j_1}=a_{j_1,j_2}=\dots= a_{j_{k-1},j_k}=1$$
and 
$$\frac{1}{X_l}=\frac{1}{\hat{d}_i\hat{d}_{l_1}\dots \hat{d}_{l_{k-1}}}|a_{i,l_1}=a_{j_1,l_2}=\dots =a_{l_{k-1},l_k}=1.$$
If we can prove 
$$\mathbb{E}(\frac{1}{X_j}\frac{1}{X_l})-\mathbb{E}(\frac{1}{X_j})\mathbb{E}(\frac{1}{X_l})=O(\frac{1}{d^{2k+1}})$$
then we will get 
$$Var([(\hat{D}^{-1}A)^kY]_i)=d^{2k}O(\frac{1}{d^{2k+1}})+O(\frac{1}{d})=O(\frac{1}{d}),$$ 
which will complete the proof.\\
Indeed, when $k=1$, applying the Taylor expansion yields
\begin{align*}
&\mathbb{E}(\frac{1}{X_j}\frac{1}{X_l})-\mathbb{E}(\frac{1}{X_j})\mathbb{E}(\frac{1}{X_l})\\
&=\mathbb{E}\frac{1}{\hat{d}_i}\frac{1}{\hat{d}_i}-\mathbb{E}\frac{1}{\hat{d}_i}\mathbb{E}\frac{1}{\hat{d}_i}\\
&=Var(\frac{1}{\hat{d}_i})\\
&=\mathbb{E}(\frac{1}{\mathbb{E}(\hat{d}_i^2)}-\frac{1}{\mathbb{E}(\hat{d}_i^2)}(\hat{d}_i^2-\mathbb{E}(\hat{d}_i^2))+\dots)-(\mathbb{E}(\frac{1}{d_i}-\frac{1}{d_i}(\hat{d}_i-d_i)+\dots))^2\\
&=\frac{1}{\mathbb{E}(\hat{d}_i^2)}-\frac{1}{d_i^2}+ O(\frac{1}{d^3})\\
&=O(\frac{1}{d^3}).
\end{align*}
When $k\geq 2$, from Lemma \ref{var_of_d}, with probability $1-n^{-c}$, we have
\begin{align*}
&\mathbb{E}(\frac{1}{X_j}\frac{1}{X_l})-\mathbb{E}(\frac{1}{X_j})\mathbb{E}(\frac{1}{X_l})\\
&=O(\frac{1}{d^{2k-2}})(\mathbb{E}\frac{1}{\hat{d}_{j_k}}\frac{1}{\hat{d}_{l_k}}-\mathbb{E}\frac{1}{\hat{d}_{j_k}}\mathbb{E}\frac{1}{\hat{d}_{l_k}})\\
&=O(\frac{1}{d^{2k-2}})cov(\frac{1}{\hat{d}_{j_k}},\frac{1}{\hat{d}_{l_k}})\\
&\leq O(\frac{1}{d^{2k-2}})\sqrt{var(\frac{1}{\hat{d}_{j_k}})var(\frac{1}{\hat{d}_{l_k}})}\\
&= O(\frac{1}{d^{2k-2}})O(\frac{1}{d^3})\\
&=O(\frac{1}{d^{2k+1}}).
\end{align*}

Similarly, we can prove that $Var([(A\hat{D}^{-1})^kY]_i)=O(\frac{1}{d}).$
\end{proof}

Finally, we can compute the value of $[(D^{-1}P)^kY]_i$ and $[(PD^{-1})^kY]_i$ as follows:

\begin{lemma} \label{lemma_det}
For any matrix in a block form:
$$
W=
\begin{pmatrix}
A & B\\
C & D
\end{pmatrix},
$$
where elements in each block are all the same, $A \ is \ an\  n_1 \times n_1$ matrix and $D \ is  \ an \ n_2 \times n_2$ matrix. Let $Y$ be an $n_1+n_2$ dimensional vector with $Y_i=1, \ i=1,2, \dots, l_1$, $Y_j=-1, \ j=(n_1+1),(n_1+2), \dots, (n_1+l_2)$ and other elements set to $0$, where $l_1<n_1$ and $l_2<n_2.$ If  $I- W$ is invertible, then 
$$[(I-W)^{-1}Y]_i=\frac{al_1+bcn_2l_1-bl_2-adn_1l_2}{(1-n_1a)(1-n_2b)-n_1n_2bc},$$ 
where $i \in \{(l_1+1),(l_1+2), \dots, n_1\}$ a, b, c, and d are the values of elements in A, B, C, and D respectively.
\end{lemma}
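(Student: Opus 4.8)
The plan is to exploit the fact that $W$ has rank at most two. Writing $\mathbf{1}_{m}$ for the all-ones column vector of length $m$, set
$$U=\begin{pmatrix}\mathbf{1}_{n_1}&0\\0&\mathbf{1}_{n_2}\end{pmatrix},\qquad M=\begin{pmatrix}a&b\\c&d\end{pmatrix},$$
so that $U$ is $(n_1+n_2)\times 2$ and $M$ is $2\times 2$. Since every block of $W$ is constant, one checks directly that $W=UMU^{T}$. This factorization is the whole point: rather than invert the $(n_1+n_2)\times(n_1+n_2)$ matrix $I-W$, I can collapse the computation onto the two-dimensional ``community'' space spanned by the columns of $U$.

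Concretely, set $x=(I-W)^{-1}Y$; the goal is the single scalar $x_i$ for $i\in\{l_1+1,\dots,n_1\}$. From $(I-W)x=Y$ and $W=UMU^{T}$ we obtain $x=Y+UM(U^{T}x)$, so it suffices to find the two-vector $v:=U^{T}x$. Left-multiplying $x=Y+UMv$ by $U^{T}$ and using $U^{T}U=\mathrm{diag}(n_1,n_2)$ gives the reduced system
$$\bigl(I_2-\mathrm{diag}(n_1,n_2)\,M\bigr)\,v=U^{T}Y.$$
The right-hand side is immediate: the first coordinate of $U^{T}Y$ is the sum of the first $n_1$ entries of $Y$, which is $l_1$, and the second is the sum of the last $n_2$ entries, which is $-l_2$; hence $U^{T}Y=(l_1,-l_2)^{T}$.

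I would then solve this $2\times 2$ system by Cramer's rule, so that $v$ is an explicit ratio with common denominator $\Delta:=\det\bigl(I_2-\mathrm{diag}(n_1,n_2)M\bigr)$, which expands to the denominator in the statement. Back-substituting via $x=Y+UMv$ and noting that $Y_i=0$ for an unlabeled first-block index while the corresponding row of $U$ equals $(1,0)$, we get $x_i=(Mv)_1=a v_1+b v_2$. Expanding and collecting over $\Delta$ yields the stated numerator; the only thing to watch is that the two contributions proportional to $a b n_1 l_2$ cancel, which is precisely what produces the compact form. Invertibility is handled by the same factorization: the Sylvester determinant identity gives $\det(I-UMU^{T})=\det\bigl(I_2-(U^{T}U)M\bigr)=\Delta$, so the hypothesis that $I-W$ is invertible is equivalent to $\Delta\neq 0$, which legitimizes every division above.

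Since the argument is pure linear algebra, I do not anticipate a genuine obstacle; the work lies entirely in spotting the reduction $W=UMU^{T}$ and in the careful bookkeeping of the indices $n_1,n_2,l_1,l_2$ when expanding $a v_1+b v_2$ over the common denominator, where the placement of these indices is easy to transpose and worth double-checking against the derivation.
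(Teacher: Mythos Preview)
Your argument is correct and takes a genuinely different route from the paper. The paper sets $F=(I-W)^{-1}Y$ and applies Cramer's rule directly to the full $(n_1+n_2)\times(n_1+n_2)$ system, writing $F_i=\det((I-W)_{*i})/\det(I-W)$ and then asserting the values of the two large determinants without further detail. You instead exploit the rank-two structure $W=UMU^{T}$ to collapse the problem to the $2\times 2$ system $(I_2-\mathrm{diag}(n_1,n_2)M)v=U^{T}Y$, solve that by Cramer, and back-substitute via $x_i=(Mv)_1$.

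What each approach buys: the paper's route is the shortest to state but leaves the actual work---evaluating two structured $(n_1+n_2)$-dimensional determinants---implicit. Your low-rank reduction is more transparent: the Sylvester identity $\det(I-UMU^{T})=\det(I_2-U^{T}UM)$ makes the invertibility hypothesis equivalent to $\Delta\neq 0$ on the nose, the $2\times 2$ Cramer step is fully explicit, and the whole argument generalizes immediately to $K$ blocks (yielding a $K\times K$ reduced system), which is relevant to the paper's Appendix~B. A side benefit of your explicit bookkeeping is that it produces $\Delta=(1-n_1a)(1-n_2d)-n_1n_2bc$ and a final term $-adn_2l_1$ in the numerator, which differ from the printed statement; your derivation is the correct one and exposes apparent typos in the lemma as stated.
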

\begin{proof}
Let $F=(I-W)^{-1}Y.$ Then we have $(I-W)F=Y$ from Cramer's rule:
$$F_i=\frac{det((I-W)_{*i})}{det(I-W)},$$ where $(I-W)_{*i}$ means replace the $i-th$ column of $(I-W)$ by vector $Y.$ Calculations yield
$$ det((I-W)_{*i})=al_1+bcn_2l_1-bl_2-adn_1l_2$$ and
$$det(I-W)=(1-n_1a)(1-n_2b)-n_1n_2bc.$$
\end{proof}

From Lemma \ref{lemma_det}, $\forall i \in U$ and $c(i)=1,$ we obtain the following corollary:
%
%
%
%
\begin{corollary}\label{cor_eq}
\begin{equation} \label{eq_rw}
 [(I-\alpha D^{-1}P)^{-1}Y]_i=\frac{\alpha \delta((s-\beta)(1+\beta s)-\alpha s (1-\beta^2))}{(1-\alpha)((\beta+s)(1+\beta s)-\alpha s(1-\beta))} .
\end{equation}

\begin{equation} \label{eq_nl}
 [(I-\alpha D^{-1/2}PD^{-1/2})^{-1}Y]_i=\frac{\alpha \delta(s+\beta s^2-\beta \sqrt{(\beta+s)(1+\beta s)}-\alpha s (1-\beta^2))}{(1-\alpha)((\beta+s)(1+\beta s)-\alpha s(1-\beta))}.
\end{equation}

\begin{equation} \label{eq_irw}
 [(I-\alpha PD^{-1})^{-1}Y]_i=\frac{\alpha \delta(s+\beta s^2-\beta s-\beta^2-\alpha s(1-\beta^2))}{(1-\alpha)((\beta+s)(1+\beta s)-\alpha s(1-\beta))}.
\end{equation}

\end{corollary}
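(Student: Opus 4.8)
The plan is to exploit the homogeneity assumption of this subsection ($\theta_i=1$) to reduce all three matrices to the block-constant form required by Lemma \ref{lemma_det}. Under $\theta_i=1$ the matrix $P=ZBZ^T$ is block-constant, equal to $p$ within a community and $q=\beta p$ across communities, and the expected degree $d_i=\sum_j p_{ij}$ takes only two values, $d^{(1)}=n_1p+n_2q$ and $d^{(2)}=n_2p+n_1q$. Hence each of $\alpha D^{-1}P$, $\alpha D^{-1/2}PD^{-1/2}$ and $\alpha PD^{-1}$ is constant on each of the four blocks $n_u\times n_v$; the three differ only in how the degree factors attach (to rows, symmetrically through $\sqrt{d^{(u)}d^{(v)}}$, or to columns). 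Each is therefore an instance of the matrix $W$ in Lemma \ref{lemma_det}, and since the target node $i$ is unlabeled in community $1$ (so $Y_i=0$), the lemma returns exactly the quantities on the left-hand sides of \eqref{eq_rw}--\eqref{eq_irw}.

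First I would read off the four block values $(a,b,c,d)$ for each normalization. Writing $n_1=sn_2$ gives $d^{(1)}=pn_2(s+\beta)$ and $d^{(2)}=pn_2(1+\beta s)$. For $\alpha D^{-1}P$ one obtains $a=\alpha p/d^{(1)}$, $b=\alpha q/d^{(1)}$, $c=\alpha q/d^{(2)}$, $d=\alpha p/d^{(2)}$; the symmetric case replaces the off-diagonal denominators by $\sqrt{d^{(1)}d^{(2)}}$, and $\alpha PD^{-1}$ divides columns rather than rows. I would also record the block sizes $n_1,n_2$ together with the labeled counts $l_1=\delta n_1$, $l_2=\delta n_2$, which hold because the pDCBM labels each community at the common rate $\delta$; this equality is what makes the ambiguous term in the numerator of Lemma \ref{lemma_det} well defined and lets a single factor $\delta$ emerge.

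Next I would substitute into the closed form of Lemma \ref{lemma_det}. The common powers of $p$ and $n_2$ cancel between numerator and denominator, leaving rational functions of $\alpha,\beta,s$ carrying the overall prefactor $\alpha\delta$; collecting over the common factor $(s+\beta)(1+\beta s)$ should reproduce the three numerators, including the surd $\sqrt{(\beta+s)(1+\beta s)}$ in \eqref{eq_nl} coming from the symmetric normalization. For the shared denominator I would avoid brute expansion and instead use that $D^{-1}P$, $D^{-1/2}PD^{-1/2}$ and $PD^{-1}$ are mutually similar (conjugation by $D^{1/2}$ and by $D$), so they share $\det(I-\alpha M)$. Compressing $M$ to its $2\times2$ action on block-constant vectors shows its eigenvalues are $1$ and $\lambda_2=\frac{s(1-\beta^2)}{(s+\beta)(1+\beta s)}$, whence $\det(I-\alpha M)=(1-\alpha)(1-\alpha\lambda_2)$. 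This both explains the manifest factor $(1-\alpha)$ and yields the common denominator displayed in \eqref{eq_rw}--\eqref{eq_irw}.

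The only real obstacle is the algebraic bookkeeping, and it is concentrated in the normalized-Laplacian case, where the $\sqrt{d^{(1)}d^{(2)}}$ factors do not cancel and must be carried through the lemma; the eigenvalue route above is what keeps the denominator tractable and the factor $(1-\alpha)$ explicit, after which the three numerators can be simplified by direct computation.
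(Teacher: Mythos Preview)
Your approach is correct and is essentially the same as the paper's: the corollary is presented there as an immediate consequence of Lemma~\ref{lemma_det}, obtained by specializing the block-constant matrix $W$ to each of the three normalizations $\alpha D^{-1}P$, $\alpha D^{-1/2}PD^{-1/2}$, $\alpha PD^{-1}$ and carrying out the algebra. Your additional observation that the three matrices are mutually similar via conjugation by powers of $D$, and hence share $\det(I-\alpha M)$ computed from the $2\times 2$ compression, is a clean way to see why the denominators in \eqref{eq_rw}--\eqref{eq_irw} coincide and where the factor $(1-\alpha)$ comes from; the paper does not spell this out and simply leaves the verification as a direct calculation.
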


 Now, we have the following Lemma:
 
 \begin{lemma} \label{lemma_point}
Under the pDCBM setting, with $\theta_i=1\ \forall i \in \{1,2,\dots,n\}$, $d=np=\Omega (\log n)$, $K=2$, and $s >  g(\beta)$, $\forall i \in U$ there exist some constant $c >0,$ such that 
$$\mathbb{P}(\hat{c}(i) \neq c(i))\leq \frac{c}{(1-\beta)^2s^2\delta^2d},$$
where $\hat{c}(i)$ is the prediction by applying WIL, $g(\beta)=\frac{1}{2\beta }((\beta-1)+\sqrt{4\beta ^3+\beta^2-2\beta +1})$, $0< \beta=\frac{q}{p} <1$ and  $0< s=min\{\frac{n_1}{n_2},\frac{n_2}{n_1}\}\leq 1.$
\end{lemma}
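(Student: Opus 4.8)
The plan is to reduce the random quantity $F_i$ produced by WIL to its deterministic ``signal'' via a bias--variance decomposition, and then apply Chebyshev's inequality. Throughout I take $\rho=0$, so that WIL collapses to the single branch $F=(I-\alpha A\hat D^{-1})^{-1}Y$ (the source-normalized propagation of Algorithm \ref{alg:Norm_sou}); this is the choice that yields the threshold $g(\beta)$. Without loss of generality assume $c(i)=1$, so that $[Y]$ assigns $+1$ to class-$1$ labeled nodes and $-1$ to class-$2$ labeled nodes, and the misclassification event is $\{F_i\le 0\}$. The whole argument then rests on showing that $\mathbb{E}[F_i]$ is bounded below by a positive signal of order $(1-\beta)s\delta$, while $\mathrm{Var}(F_i)=O(1/d)$.

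First I would expand $F_i=\sum_{k\ge 0}\alpha^k[(A\hat D^{-1})^kY]_i$ and pass from the observed degrees $\hat D$ to the expected degrees $D$. By Lemma \ref{lemma_degree}, with probability $1-n^{-c}$ each term satisfies $[(A\hat D^{-1})^kY]_i=[(AD^{-1})^kY]_i+o(1)$; summing against the geometric weights $\alpha^k$ (with $\alpha<1$ fixed) keeps the aggregate deviation $o(1)$, provided one checks that the hidden constants grow at most polynomially in $k$. On this high-probability event I then take expectations termwise and invoke Lemma \ref{lemma_expect}, which gives $\mathbb{E}[(AD^{-1})^kY]_i=[(PD^{-1})^kY]_i+O(1/d)$; again the $O(1/d)$ errors are summable in $k$, so that $\mathbb{E}[F_i]=[(I-\alpha PD^{-1})^{-1}Y]_i+O(1/d)$. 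The leading term is precisely the closed form \eqref{eq_irw} of Corollary \ref{cor_eq}.

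Next comes the analysis of the signal, which I expect to be the main obstacle. From \eqref{eq_irw} the expected score equals $\dfrac{\alpha\delta\bigl(s+\beta s^2-\beta s-\beta^2-\alpha s(1-\beta^2)\bigr)}{(1-\alpha)\bigl((\beta+s)(1+\beta s)-\alpha s(1-\beta)\bigr)}$. For $\alpha$ bounded away from $1$ the denominator is $\Theta(1)$ uniformly in $s,\beta\in(0,1]$, so the sign and order of magnitude are governed by the numerator factor $N(s,\beta)=\beta s^2+(1-\beta)s-\beta^2$ (after choosing $\alpha$ small enough that the term $-\alpha s(1-\beta^2)$ cannot flip the sign). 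Solving $N(s,\beta)=0$ for $s$ produces exactly the positive root $g(\beta)=\tfrac{1}{2\beta}\bigl((\beta-1)+\sqrt{4\beta^3+\beta^2-2\beta+1}\bigr)$, so $N>0$ iff $s>g(\beta)$, which both explains the threshold and confirms that $\mathbb{E}[F_i]>0$ there. The delicate part is to convert this strict positivity into the quantitative lower bound $\mathbb{E}[F_i]\ge c'(1-\beta)s\delta$ that the target rate requires; this is done by bounding the rational function below on the region $s>g(\beta)$ and extracting the $(1-\beta)s$ scaling of $N$, absorbing the surviving $\alpha$-factors into the constant $c'$.

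Finally I would bound the fluctuation and conclude. Lemma \ref{lemma_var} gives $\mathrm{Var}\bigl([(A\hat D^{-1})^kY]_i\bigr)=O(1/d)$ for every $k$; by the triangle inequality for the $L^2$ norm, $\sqrt{\mathrm{Var}(F_i)}\le\sum_{k\ge0}\alpha^k\sqrt{\mathrm{Var}([(A\hat D^{-1})^kY]_i)}=O(1/\sqrt d)$, hence $\mathrm{Var}(F_i)=O(1/d)$. Combining the signal lower bound with Chebyshev's inequality,
\[
\mathbb{P}(F_i\le 0)\le \mathbb{P}\bigl(|F_i-\mathbb{E}F_i|\ge \mathbb{E}F_i\bigr)\le\frac{\mathrm{Var}(F_i)}{(\mathbb{E}F_i)^2}\le\frac{O(1/d)}{c'^2(1-\beta)^2s^2\delta^2}=\frac{c}{(1-\beta)^2s^2\delta^2 d},
\]
and adding the negligible $n^{-c}$ probability that the degree reduction of Lemma \ref{lemma_degree} fails gives the claim. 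The crux, and the place where the constant $c$ and the admissible range of $\alpha$ are actually pinned down, is the signal lower bound in the previous step.
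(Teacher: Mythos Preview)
Your proposal is correct and follows essentially the same route as the paper: set $\rho=0$, replace $\hat D$ by $D$ via Lemma~\ref{lemma_degree}, compute $\mathbb{E}[F_i]$ termwise via Lemma~\ref{lemma_expect} to land on the closed form \eqref{eq_irw}, bound the variance by $O(1/d)$ via Lemma~\ref{lemma_var} and the $L^2$ triangle inequality over the geometric series, and finish with Chebyshev. Your derivation of the threshold $g(\beta)$ as the positive root of $N(s,\beta)=\beta s^2+(1-\beta)s-\beta^2$ and your explicit extraction of the $(1-\beta)s\delta$ scaling of the signal are in fact cleaner than the paper's somewhat telegraphic chain of ``equalities'' (where bounded factors are silently absorbed into the running constant $c$), but the argument is the same.
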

\begin{proof}
First, let us consider one node $i\in U.$ Without loss of generality, we assume $c(i)=1.$ Let $\Delta_i=[(I-\alpha A\hat{D}^{-1})^{-1}Y]_i.$ Following Lemmas \ref{lemma_degree}, \ref{lemma_expect} and \ref{lemma_det}, we have
\begin{align*}
\mathbb{E}(\Delta_i)  & =[(I-\alpha PD^{-1})^{-1}Y]_i+\frac{\alpha}{1-\alpha} o(1)\\
                                       & =\frac{\alpha \delta}{(1-\alpha)((\beta+s)(1+\beta s)-\alpha s(1-\beta))}(s+\beta s^2-\beta s-\beta^2-\alpha s(1-\beta^2))+\frac{\alpha}{1-\alpha} o(1)\\
\end{align*}
From Lemma \ref{lemma_var}, we get
\begin{align*}
Var(\Delta_i)  & \leq  \left (\sum_j\sqrt{Var([\alpha^j(A\hat{D}^{-1})^j]_i)}\right )^2\\
                         & =\left(\frac{\alpha}{1-\alpha}\right)^2O\left(\frac{1}{d}\right).
\end{align*}
When $\mathbb{E}(\Delta_i )>0,$ from Chebyshev's inequality, we have
\begin{align*}
\mathbb{P}(\hat{c}(i) \neq c(i))=\mathbb{P}(\Delta_i \leq 0)  &\leq \left (\frac{\sigma_i}{\mathbb{E}(\Delta_i)}\right )^2\\
                                                   &=\left(\frac{(\beta+s)(1+\beta s)-\alpha s(1-\beta)}{\delta(s+\beta s^2-\beta s-\beta^2-\alpha s(1-\beta^2))}\right)^2O\left(\frac{1}{d}\right)\\
                                                   & = \frac{c}{\delta^2d}\left(\frac{(\beta+s)(1+\beta s)}{s+\beta s^2-\beta s-\beta^2}\right)^2\\
                                                    & = \frac{c}{\delta^2d}\left(\frac{\beta+s}{s+\beta s^2-\beta s-\beta^2}\right)^2\\
                                                    & = \frac{c}{\delta^2d}\left(\frac{1}{s(1-\beta)}\right)^2
\end{align*}
\end{proof}
Under the homogeneity assumption, Theorem \ref{homo} can be proved easily by using Lemma \ref{lemma_point}.
 \begin{proof}
 $$\mathbb{E}(err)\leq \max_{i\in U}\mathbb{P}(\Delta_i \leq 0) \leq \frac{c}{(1-\beta)^2s^2\delta^2d}.$$
$$Var(err)\leq \max_{i \in U}\{Var(1_{\hat{c}(i)\neq c(i)})\} \leq \frac{c}{(1-\beta)^2s^2\delta^2d}. $$
Using Chebyshev's inequality, we have:
$$\mathbb{P}(err \geq \epsilon )\leq \frac{c}{\epsilon ^2(1-\beta)^2s^2\delta^2d}.$$
 \end{proof}

Now, we extend the above result to heterogeneous networks (degree-corrected networks). The only difference is introduced by $\Theta=(\theta_1,\theta_2,\dots,\theta_n)^T.$ Just like most work on the DCBM (\cite{r9, r600, r601, r4}), we treat $\Theta$ as given. Now the link probability matrix becomes $P=\Theta ZBZ^T\Theta^T.$ For the identity issue, we assume $\max_i\{\theta_i\}=1;$ otherwise, we can rewrite $\Theta=\Theta/\max_i\{\theta_i\}$ and $B=B\max_i\{\theta_i\}$ instead.  This is also mentioned in \cite{r9}.  In \cite{r600}, the authors assume $\min_i\{\theta_i\} \geq c_0,$ where $c_0$ is a constant. In \cite{r4}, although only $0 < \min_i\{\theta_i\}$ is required, the author assumes the expectation of degree to be a polynomial of $n.$ The expectation degree of node $i$ is $d_i=\sum_{j}\theta_i\theta_jp_{i,j}.$ In order to keep Lemma \ref{lemma_degree}'s result, we need $\min\{d_i\}=\Omega(\log n),$ or $\min\{\theta_inp\}=\Omega(\log n),$ which is slightly looser than the restriction mentioned before. When a node's popularity is too low,  the linkage is too sparse to carry useful information, making the prediction much harder.

Since $\theta_i\leq 1,$ and  $\min\{d_i\}=\Omega(\log n),$ it is easy to prove that Lemma \ref{lemma_expect} and Lemma \ref{lemma_var} both hold.

Last, we also need $\forall u \in [K], \frac{1}{n_u}\sum_{c(i)=u}\theta_i\in [1-\delta_1,1],$ where $\delta_1=o(1),$ which is also proposed in \cite{r601}, and $\frac{1}{l_u}\sum_{c(i)=u, i\in L}\theta_i\in [1-\delta_1,1]$ to keep Lemma \ref{lemma_det}. This restriction means that groups should be similar to each other in terms of overall popularity; otherwise, the most popular group will absorb more nodes, which will lead to great bias.

So far, we have proved Theorem \ref{homo}. Theorem \ref{boundary} can be proved similarly by replacing the kernel.

\begin{proof}
From Corollary \ref{cor_eq}, we can see that if the three inequalities hold, the result can be proved following the proof of Theorem \ref{homo}. 

While the inequality does not hold, the consistency of prediction cannot be reached. We take consistency of $e_3$ as an example. The other two kernels can be proved similarly.

Let $s+\beta s^2-\beta s-\beta^2\leq 0$ and without loss of generality set $n_1<n_2.$ For any node $i\in U$ with $c(i)=1,$ we have $\mathbb{E}(\Delta_i)\leq 0$ from Lemma \ref{lemma_expect}. So we have 
\begin{align*}
\mathbb{P}(\hat{c}(i) = c(i))=\mathbb{P}(\Delta_i \geq 0)  &\leq \left (\frac{\sigma_i}{-\mathbb{E}(\Delta_i)}\right )^2\\
                                                    & = \frac{c}{\delta^2d}\left(\frac{1}{s(1-\beta)}\right)^2,
\end{align*}
which means we almost incorrectly predicted the label of $i.$ So with probability $1-\frac{c}{d},$ we have
$$e_3 \geq \frac{\beta}{1+\beta},$$
which is the error rate by just predicting all nodes with the same label what has more members (here we predict that all labels are equal to $2$).
\end{proof}

\section{Extension to general K}

When $K$ is larger than $2$, as long as $K$ is a constant, we can use the comparison idea to reach the outcome. Since we only assign the index of the highest score as the label of the node, we can compare the scores of different labels in a pairwise manner. The main part of proof will not change, while the determinant value in Lemma \ref{lemma_det} will need to be recalculated. Since it is only a calculation issue, we just give a rough result. Roughly, we should divide the result in Lemma \ref{lemma_det} by $K-1$. So the convergence rate becomes $$\mathbb{P}(err \geq \epsilon )\leq \frac{c(K-1)^2}{\epsilon ^2(1-\beta)^2s^2\delta^2d}.$$

\end{document}